\definecolor{DarkBlue}{rgb}{0.1,0.1,0.5}
\definecolor{DarkGreen}{rgb}{0.1,0.5,0.1}
\renewcommand*{\backref}[1]{}
\renewcommand*{\backrefalt}[4]{%
    \ifcase #1 (Not cited.)%
    \or        (Cited on page~#2)%
    \else      (Cited on pages~#2)%
    \fi}
\newtheorem{theorem}{Theorem}
\newtheorem{proposition}{Proposition}
\newtheorem{corollary}{Corollary}
\newtheorem{definition}{Definition}
\newcommand{\E}{\mathbb{E}}
\begin{document}
\title{{\bfseries Quantifying Infra-Marginality and Its Trade-off with\\ Group Fairness}}
\author{\textbf{Arpita Biswas}\\ Indian Institute of Science$\quad$\\
arpitab@iisc.ac.in
\and
\textbf{Siddharth Barman}\\ Indian Institute of Science\\ barman@iisc.ac.in
\and \textbf{Amit Deshpande}\\ Microsoft Research India\\ amitdesh@microsoft.com
\and \textbf{Amit Sharma}\\ Microsoft Research India\\amshar@microsoft.com
}
\date{}
\maketitle

\begin{abstract}
In critical decision-making scenarios, optimizing accuracy can lead to a biased classifier, hence past work recommends enforcing group-based  fairness metrics in addition to maximizing accuracy. However, doing so exposes the classifier to another kind of bias called \emph{infra-marginality}. This refers to individual-level bias where some individuals/subgroups can be worse off than under simply optimizing for accuracy. For instance, a classifier implementing  race-based parity may significantly disadvantage women of the advantaged race. To quantify this bias, we propose a general notion of $\eta$-infra-marginality that can be used to evaluate the extent of this bias. We prove theoretically that, unlike other fairness metrics,  infra-marginality does not have a trade-off with accuracy: high accuracy directly leads to low infra-marginality. This observation is confirmed through empirical analysis on multiple simulated and real-world datasets. Further, we find that maximizing group fairness often increases infra-marginality, suggesting the consideration of both group-level fairness and individual-level infra-marginality. 
However, measuring infra-marginality requires knowledge of the true distribution of individual-level outcomes correctly and explicitly. We propose a practical method to measure infra-marginality, and a simple algorithm to maximize group-wise accuracy and avoid infra-marginality.
\end{abstract}

\section{Introduction}
Consider a machine learning algorithm being used to make decisions in societally critical domains such as healthcare \cite{irene2018why,goodman2018machine}, education \cite{tierney2013fairness}, criminal justice \cite{angwin2016machine,berk2017fairness}, policing~\cite{simoiu2017problem,goel2017combatting} or finance \cite{furletti2002overview}. Since data on past decisions may include historical or societal biases, it is generally believed that optimizing accuracy can result in an algorithm that sustains the same biases and thus is unfair to underprivileged groups \cite{romei2014multidisciplinary,barocas2016big,chouldechova2018frontiers}. Theoretically too, it has been shown that it is not possible to have calibration and 
fairness simultaneously~\cite{kleinberg2017inherent} and several other results show that achieving multiple fairness constraints simultaneously is infeasible~\cite{chouldechova2017fair,corbett2017algorithmic}.

Given this tradeoff between accuracy and fairness metrics, algorithms typically aim to maximize accuracy while also satisfying different notions of fairness, such as disparate impact~\cite{kamiran2012data,feldman2015certifying,zafar2017afairness},  statistical parity~\cite{kamishima2012fairness,zemel2013learning,corbett2017algorithmic}, 
equalized odds~\cite{hardt2016equality,kleinberg2017inherent,woodworth2017learning}. Most of these fairness notions, however, enforce group-level constraints on pre-specified \emph{protected} attributes and give no guarantees on fairness with respect to other sensitive attributes.  
That is, it is difficult to generalize these group-level constraints to be fair with respect to multiple sensitive attributes which can lead to biases when individuals may belong to multiple historically disadvantaged groups.  As an example, a classifier that is constrained to be fair on race may end up introducing discrimination against women of the advantaged race, and vice-versa. Moreover,  being group-wise constraints, they provide no guarantees on fairness for individuals' outcomes within these groups. 

In this paper, therefore, we consider a different notion of bias, \emph{infra-marginality}~\cite{simoiu2017problem, corbett2018measure}, that can handle multiple sensitive attributes simultaneously and enforces individual-level rather than group-level adjustments. The idea behind this individual-level fairness is that individuals with the same probability of an outcome (say, the same risk probability) receive the same decision, irrespective of their sensitive attributes. Any deviation from this ideal necessarily leads to misclassifying low-risk members as high risk and high-risk members as low risk, which in turn may harm the members of all groups. For instance, consider the medical domain where doctors assess the severity of a person's illness (risk probability) and prioritize treatment accordingly. It may be acceptable to prioritize patients based on a given reliable estimate of a patient's risk, but any deviation from this rule due to a group-fairness constraint on a particular demographic, may deprive high-risk people from treatment, some of whom in turn may belong to another disadvantaged demographic. Hence, these group-level adjustments often introduce unintended biases and can be marginally \emph{unfair} at an individual-level; thus the name ``infra-marginality'' (see Section~\ref{sec:infra} for a definition).  Conceptually, therefore, mitigating infra-marginality has a notable advantage: the fairness of a decision is based directly on the underlying  outcome probability of each individual, rather than post hoc group-level adjustments or constraints. 

To remove infra-marginality, Simoiu et al.~\cite{simoiu2017problem} propose taking decisions using a single threshold on the true outcome probability, whenever the outcome probability is known.\footnote{In the above-mentioned example from the medical domain, this corresponds to selecting a patient (prioritizing treatment) if and only if the risk (outcome) probability of the patient is above a threshold.} Such a single-threshold classifier implements the high-level idea that legislation should apply equally to all individuals and not be based on group identity. A set of decisions (equivalently, a classifier) that conforms to this ideal is said to have zero infra-marginality (and, thus, single-threshold fair)~\cite{simoiu2017problem, corbett2018measure}. 

Extending past work on infra-marginality~\cite{simoiu2017problem}, our main contribution is to quantify the notion and propose a generalized version of infra-marginality which we call $\eta$-infra-marginality. It is relevant to note that Simoiu et al.~\cite{simoiu2017problem} essentially consider infra-marginality as a ``binary'' property--either a classifier suffers from infra-marginality or it does not. Extending this construct, the current work defines the \emph{degree of infra-marginality}. Furthermore, using this definition, we prove theoretically that high accuracy of a classifier directly leads to low infra-marginality. 
These results also hold group-wise: high accuracy, per group, means low group-wise infra-marginality.

Results on simulated and real-world datasets confirm this observation. First, we construct a set of simulated datasets with varying distributions of outcome per demographic group. Under all simulations, we find that training a machine learning model to maximize accuracy with respect to the outcome also yields low infra-marginality. Specifically, infra-marginality is at most $\delta$ distance away from the classification error. Here, parameter $\delta$ depends solely on the underlying data set, i.e., it is fixed for the given problem instance and is independent of any classifier one considers. In addition, when we focus on individuals from a sensitive group separately, we find that maximizing group-wise accuracy results in low infra-marginality for both groups. Second, we consider two datasets that are widely used for studying algorithmic fairness, namely \texttt{Adult-Income} and \texttt{Medical-Expense} datasets, and demonstrate the connection between maximizing accuracy and lowering infra-marginality. Since there is no ground-truth for individual outcome probabilities, we create an evaluation testbed by developing classifiers with subsets of features and benchmarking infra-marginality with respect to the outcome probabilities learnt using all features. This benchmark serves as an approximation of the true outcome probability. We find that even when a classifier is not trained on true outcome risk, we find a similar positive trend between high accuracy and low infra-marginality.       

The close connection of infra-marginality to accuracy also illustrates the inherent tradeoff with group fairness. Since group-wise fairness constraints necessarily reduce accuracy, they exacerbate infra-marginality. 
Using the meta-fair algorithm~\cite{celis2018classification} for training group-fair classifiers, we find that increasing the weight on a group-wise fairness constraint (such as demographic parity or equal false discovery rates) increases infra-marginality. This result is consistent for both simulated and real-world datasets: whenever constraints on accuracy optimization lead to an increase in a group-wise fairness metric, they also lead to an increase in infra-marginality bias. We argue these results present a difficult choice for ensuring fairness: group-wise constraints may be blind to fairness on other unobserved groups, while lowering infra-marginality increases individual-level fairness but may lead to group-level unfairness.   
Thus, as a practical measure, we propose maximizing group-wise accuracy for lowering individual-level bias whenever the true outcome probability is measurable.

While these results point to the importance of considering infra-marginality, bounding infra-marginality in practice is a challenge because it depends on the knowledge of a true outcome probability for individuals. This is often not available, especially when the true probabilities of individuals are not known and are required to be learnt using datasets which often contain only one-sided information (for example, recidivism outcomes of only those individuals who were granted bail or loan repayment outcome of only those who were granted a loan). In such cases, we may not have correct estimates of risk probabilities of the underlying population and hence infra-marginality remains a theoretical concept. 

To be useful in practice, we need two conditions: first, that objective measures of the outcomes are available and, second, that the available data is sampled uniformly from the target population of interest (and not one-sided, or based on biased decisions). Happily, outcomes of interest are available in many decision-making scenarios where the outcomes can be objectively recorded. For example, in the medical domain, outcomes are often categorical and objective (e.g., cured of a disease or not). Objective outcomes also occur in security-related decisions such as searching for  a weapon in a vehicle or screening passengers at an airport for prohibited goods. 
In all these cases, the outcome of interest is measurable and is not affected by underlying biases, unlike subjective outcomes such as success in school or work, awarding loans, etc.
  The second condition is more stringent and requires that we observe the outcomes for a \emph{representative} sample of the underlying population and not be biased by past decisions. This assumption can be satisfied either by utilizing additional knowledge on the decision-making process or by actively changing decision-making. In Section~\ref{sec:conclude}, we discuss potential approaches such as by assuming that a subset of decisions are calibrated to the true outcome risk~\cite{goel2017combatting,simoiu2017problem,pierson2018fast}, by having a random sample of outcome data, by randomizing the decision for a fraction of users, or by advanced strategies such as contextual bandits~\cite{agarwal2017bandits}.\\  

Overall, our results point to the importance of data design over statistical adjustments in achieving fairness in algorithmic decision-making and the limitations of depending on an available dataset for ensuring fairness. Rather than post hoc adjustments to steer an algorithm towards a chosen outcome, or introducing external constraints, it is worthwhile to consider obtaining accurate and unbiased outcome measurements. Under these conditions, our work proposes that the practice of optimizing for group-wise accuracy also leads to low bias (low infra-marginality), and allows improvements in machine learning to directly be applicable in improving fairness.\\

To summarize, we make the following contributions:
\begin{itemize}
      \item Our conceptual contribution is to develop a quantitative measure to characterize the problem of infra-marginality. Extending past work~\cite{simoiu2017problem, corbett2018measure} on infra-marginality in algorithmic fairness, we propose a general $\eta$-infra-marginality formulation that enables a measure of discrimination under the definition of {single-threshold fairness} (Section~\ref{sec:infra}).
      \item Second, we show that infra-marginality has a striking property: Within an additive margin, the more accurate the classifier, lower is its degree of infra-marginality (Section~\ref{section:accuracy-infra}).       We provide a rigorous proof of this claim in Theorem~\ref{theorem:main}. This result asserts that, for well-behaved data sets and given a classifier threshold $\tau$, higher accuracy points to lower infra-marginality. Also, complementarily, an inaccurate classifier will necessarily induce infra-marginality, to a certain degree. 
Moreover, this result holds group-wise (Corollary~\ref{cor:groups}), implying that higher group-wise accuracy reduces the infra-marginality problem per group. We also provide two propositions that identify relevant settings wherein Theorem~\ref{theorem:main} can be applied.
\item Third, when the classifier threshold $\tau$ is not fixed,  we provide an algorithm for learning a classifier that optimizes accuracy subject to infra-marginality constraints, assuming that the true outcome probabilities are available (Section~\ref{section:opt-class}). We show that the problem reduces to applying a linear constraint and hence we can efficiently find an optimal solution. This is notably in contrast to prominent fairness metrics, which introduce non-convex (fairness) constraints in the underlying learning algorithms. 
\end{itemize}

\section{Background and Related Work}
Datasets that collect past decisions on individuals and their resultant outcomes often reflect prevailing societal biases~\cite{barocas2016big}.  These biases correspond to discrimination in decision-making, in recording outcomes, or both. During decision-making, individuals from a specific group may be less preferred for a favorable decision, thus reducing their chances of appearing in a dataset with favorable outcomes. This is especially the case when data recording is one-sided: we observe the outcomes only for those who received a favorable decision, such as being awarded a job, loan or bail. In addition, the discrimination can also occur when recording outcomes, e.g.,  when measuring hard-to-measure outcomes such as defining ``successful'' job candidates, employees or students. Due to these biases in dataset collection, building a decision-making algorithm by maximizing accuracy on the dataset perpetuates the bias in selection of individuals or the measurement of a desirable outcome.

To counter this bias, various group-wise statistical constraints have been proposed for a decision classifier that stipulate equitable treatment of different demographic groups. \emph{Demographic or statistical parity} says that a fair algorithmic prediction should be independent of the sensitive attribute and thus each demographic group should have the same fraction of favorable decisions \cite{kamishima2012fairness,zemel2013learning,corbett2017algorithmic}. \emph{Equalized odds} \cite{hardt2016equality,kleinberg2017inherent,woodworth2017learning} says that for a fair algorithmic prediction, the true positive rates and the false positive rates on different sensitive demographics should be as equal as possible. Combined with the accuracy objective, equalized odds imply similar, high accuracy for every sensitive demographic. \emph{Disparate impact} \cite{kamiran2012data,feldman2015certifying,zafar2017afairness} refers to the impact of policies that affect one sensitive demographic more than another, even though the rules applied are formally neutral. Mathematically, the probability of being positively classified should be the same for different sensitive demographics. Chouldechova and Roth~\cite{chouldechova2018frontiers} along with Friedler et al.~\cite{friedler2018comparative} provide a survey of various fairness notions and algorithms to incorporate them. 
However, in practice, the output of these group-fair algorithms may show worse predictions among all the groups; for example, while trying to ensure \textit{equal} false positive rates among two demographic groups, the predictions may end up increasing false positive rates for both the groups.  

The difficulty of ensuring fairness for different groups suggests an alternative definition of fairness based on individual-level constraints. Individual fairness \cite{dwork2012fairness,zemel2013learning} propose that similar individuals should be treated as similarly as possible. Thus, rather than stratifying users by pre-specified groups, individual-level fairness constraints use available data (e.g., demographics) to define a similarity measure between individuals and then enforce equitable treatment for all similar individuals. The question then is, how to define a suitable similarity measure? Rather than choosing variables for defining similarity on, \cite{simoiu2017problem} propose defining similarity between two individuals based on the true probability of outcome for the two individuals. That is, people with the same underlying probability of a favorable outcome are the most similar to each other. They define that a classifier has \emph{infra-marginality} bias if people with the same probability of the outcome are given different decisions~\cite{simoiu2017problem,pierson2018fast,corbett2018measure}.   Crucially, this definition depends on a measurement of true outcome probability, but does not restrict analysis to a few pre-specified groups.  

Given these considerations, different fairness notions may be suitable for different settings. When all important marginalized groups are defined (e.g. by law), then group-fairness constraints are more relevant. When true outcome probabilities are known (such as in randomized search/inspection decisions in security applications), then infra-marginality constraints become a better alternative.  In this paper, we focus on the latter and describe the value of considering infra-marginality in fairness decisions. In Section 3, we extend past work to propose a general notion of infra-marginality and prove our main result on the connection between accuracy and low infra-marginality. In Section 4, we provide extensive empirical results showing the inherent tradeoff between typical group fairness constraints and infra-marginality. 

\section{Defining Infra-marginality}
\label{sec:infra}
In this section, we define and characterize the degree of infra-marginality. This measure quantifies an extent to which a classifier violates the notion single-threshold fairness, i.e., it quantifies the problem of infra-marginality identified by Simoiu et al.~\cite{simoiu2017problem}.
 
\subsection{Problem Setup}
We consider a binary classification problem over a set of instances, $\mathcal{X}$, wherein label $1$ denotes the \emph{positive} class and label $0$ denotes the \emph{negative} class, e.g., in the search-for-a-contraband setup, class label $1$ would indicate that the individual (instance) is in possession of a contraband and, complementarily, a label $0$ would correspond to the case in which the individual is not carrying a contraband. Conforming to the standard framework used in binary classification, we will assume that feature vectors (equivalently, data points) $x$ are drawn from the data set $\mathcal{X}$ via a feature distribution. Furthermore, for an instance $x \in \mathcal{X}$, let $p^*_x$ be the inherent probability of being in the positive class; in the previous example, $p^*_x$ is the endowed probability that an individual $x$ is carrying the contraband. 

Under \textit{single-threshold fairness}, a set of (binary) decisions are deemed to be fair if and only if they are obtained by applying a single threshold on the $p^*$s of the instances  (irrespective of the instance's sensitive attributes, such as race, ethnicity, and gender). In the above-mentioned contraband example, this corresponds to searching an individual $x$ if and only if its $p^*_x$ is above a (universally fixed) threshold $\tau$. 

\subsection{Degree of Infra-Marginality}
For ease of presentation, we will identify each data point with its feature vector $x$ and use $y^*_x \in \{0, 1\}$ to denote the (outcome) label of the data point $x$. As mentioned before, $p^*_x$ is the outcome probability of each $x \in \mathcal{X}$. Therefore, for every $x \in \mathcal{X}$, the binary label  $y^*_x$ is equal to one with probability $p^*_x$ and, otherwise (with probability $1-p^*_x$), we have $y^*_x = 0$. The standard classification exercise corresponds to learning a classier that optimizes accuracy with respect to the $y^*$ labels.

For a binary classifier ${C}: \mathcal{X} \mapsto \{0 , 1\}$, we will use $\alpha^C$ to denote its accuracy with respect to the labels $y^*_x \sim {\rm Bernoulli} (p^*_x)$, i.e., $\alpha^C \coloneqq \E_{(x, y^*_x)} \left[ \mathbbm{1}_{C(x) = y^*_x} \right]$, here the expectation is with respect to the underlying feature distribution\footnote{That is, the random sample $(x, y^*_x)$ is drawn from an underlying (feature, label) joint distribution and, conditioned on a feature $x$, we have $y^*_x \sim {\rm Bernoulli} (p^*_x)$.}  and $\mathbbm{1}_{C(x) = y^*_x}$ is the indicator random variable which denotes whether the output of the classifier, $C(x)$, is equal to $y^*_x$, or not. Note that high accuracy simply implies a high value of $\alpha^C  \in [0,1]$.

To formally address single-threshold fairness, we define a (deterministic) label, $f^*_x \in \{0, 1\}$, for each data point $x \in \mathcal{X}$. Semantically, $f^*_x$ is the (binary) outcome of an absolutely fair (in the single-threshold sense) decision applied to $x$.  This, by definition, means that $f^*$s are obtained by applying the same (fixed) threshold $\tau$ on the $p^*$s across all instances: for a fixed fairness threshold $\tau \in [0,1]$, we have $f^*_x = 1$ if $p^*_x > \tau$, else if $p^*_x \leq \tau$ then $f^*_x = 0$. 
With this notation in hand we define the central construct of the present work. 
\begin{definition}[Infra-Marginality of a Classifier]
With respect to a given threshold $\tau \in [0,1]$, the degree of infra-marginality, $\eta^C$, of a classifier $C: \mathcal{X} \mapsto \{0,1\}$ is defined as
\begin{align}
\eta^C \coloneqq \E_{x} \left[ \left|C(x) - f^*_x \right| \right].\label{eq:infra}
\end{align}
Here, the expectation is with respect to the feature distribution over the data set $\mathcal{X}$. In addition, for each $x \in \mathcal{X}$, the label $f^*_x = 1$ if $p^*_x > \tau$, otherwise $f^*_x = 0$. 
\end{definition}
A classifier $C$ conforms to the ideal of single-threshold fairness iff $\eta^C = 0$. Furthermore, the quantity $\eta^C$ can be interpreted as the extent to which the classifier's outputs (i.e., $C(x)$s) differ from the ideal labels (i.e., from the single-threshold benchmarks) $(f^*_x)_{x \in \mathcal{X} }$. Indeed, smaller the value of $\eta^C$ the lower is $C$'s infra-marginality. \\

We will use $\mathcal{D}_{p^*}$ and $\mathcal{D}_{f^*}$, respectively, to denote the distributional form of the collection of generative probabilities $(p^*_x)_{x \in \mathcal{X}}$ and labels $( f^*_x )_{x \in \mathcal{X}}$. Formally, $\mathcal{D}_{f^*}$ is a discrete distribution wherein a probability mass of $q$ is placed on $1$ and probability mass $1-q$ is placed on $0$; here, $q$ is the fraction of data points with the property that $f^*_x = 1$, $q := \frac{1}{|\mathcal{X}|} |\{ x \in \mathcal{X} \mid f^*_x = 1 \}|$. Similarly, the cumulative distribution function (cdf) of $\mathcal{D}_{p^*}$ is $F_{\mathcal{D}_{p^*}} (t) := \frac{1}{|\mathcal{X}|} |\{ x \in \mathcal{X} \mid p^*_x \leq t \}|$.
 
Note that the distributions $\mathcal{D}_{p^*}$ and $\mathcal{D}_{f^*}$ are supported on the distinct $p^*_x \in [0,1]$ values (across $x \in \mathcal{X}$) along with $0$ and $1$.   Write $\| \mathcal{D}_{p^*} - \mathcal{D}_{f^*} \|_1$ to denote the \emph{weighted} $\ell_1$ distance between the two distributions; specifically, the $\ell_1$ distance here is computed by normalizing (i.e., weighing) with respect to the feature distribution. Formally,\footnote{Note that if the feature distribution picks data points uniformly at random from $\mathcal{X}$, then this equation simply implies that the $\ell_1$ distance between the two distributions is equal to the average of the differences between the $p^*_x$ and $f^*_x$ values.} 
\begin{align}
\| \mathcal{D}_{p^*} - \mathcal{D}_{f^*} \|_1 & =  \E_x \left[ | p^*_x - f^*_x | \right] \label{eq:ell1}
\end{align}
 
We will also refer to $\| \mathcal{D}_{p^*} - \mathcal{D}_{f^*} \|_1$ as the $\ell_1$ distance between the generative probabilities $(p^*_x)_{x \in \mathcal{X}}$ and labels $( f^*_x )_{x \in \mathcal{X}}$. Note that this distance is a property of the data set and the underlying feature distribution--it is fixed for the given problem instance and is independent of any classifier one considers. 

\section{Accuracy and Infra-marginality}
\label{section:accuracy-infra}
The key result of this section is the following theorem which establishes that as long as the $\ell_1$ distance between $(p^*_x)_{x \in \mathcal{X}}$ and $( f^*_x )_{x \in \mathcal{X}}$ is small, an accurate classifier will also have low infra-marginality. Complementary, under small $\ell_1$ distance, an inaccurate classifier will necessarily induce high infra-marginality. 

\subsection{High accuracy, low infra-marginality}
\begin{theorem}
\label{theorem:main}
For a data set $\mathcal{X}$, let $\delta$ be the $\ell_1$ distance between the generative (outcome) probabilities $(p^*_x)_{x \in \mathcal{X}}$ and the (single-threshold) labels $( f^*_x )_{x \in \mathcal{X}}$. Then, for any binary classifies $C: \mathcal{X} \mapsto \{0, 1 \}$ with accuracy $\alpha^C$, the degree of infra-marginality satisfies 
\begin{align*} \left(1 - \alpha^C \right) - \delta \leq \eta^C \leq \left(1 - \alpha^C \right) +  \delta.
\end{align*}
\end{theorem}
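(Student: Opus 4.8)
The plan is to rewrite the complementary accuracy $1-\alpha^C$ so that it, $\eta^C$, and $\delta$ all become expectations (over the feature distribution) of pointwise absolute differences among the three scalars $C(x)$, $p^*_x$, and $f^*_x$; the two-sided bound then drops out of the triangle inequality on the real line.

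First I would use that $C$ is binary. Since $C(x), y^*_x \in \{0,1\}$, the event $\{C(x) \neq y^*_x\}$ has indicator $\mathbbm{1}_{C(x) \neq y^*_x} = |C(x) - y^*_x|$, so $1 - \alpha^C = \E_{(x, y^*_x)}\big[\, |C(x) - y^*_x| \,\big]$. Next I would peel off the randomness in the label via the law of total expectation: condition on the feature $x$ and average over $y^*_x \sim {\rm Bernoulli}(p^*_x)$. The elementary identity $\E_{y^*_x}\big[|C(x) - y^*_x|\big] = |C(x) - p^*_x|$ holds because the left-hand side equals $\E[y^*_x] = p^*_x$ when $C(x) = 0$ and equals $\E[1 - y^*_x] = 1 - p^*_x$ when $C(x) = 1$, matching $|C(x) - p^*_x|$ in both cases. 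Averaging over the feature distribution then gives the reformulation $1 - \alpha^C = \E_x\big[\, |C(x) - p^*_x| \,\big]$, i.e. the expected classification error equals the weighted $\ell_1$ distance between the classifier's outputs and the generative probabilities.

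With this in hand, both inequalities are immediate from $|a - c| \le |a - b| + |b - c|$ applied pointwise in $x$. Taking $b = p^*_x$, $a = C(x)$, $c = f^*_x$ gives $|C(x) - f^*_x| \le |C(x) - p^*_x| + |p^*_x - f^*_x|$; taking $\E_x[\cdot]$ and invoking the definitions of $\eta^C$ and $\delta$ (Equations~\eqref{eq:infra} and~\eqref{eq:ell1}) together with the reformulation above yields $\eta^C \le (1-\alpha^C) + \delta$. Symmetrically, $|C(x) - p^*_x| \le |C(x) - f^*_x| + |f^*_x - p^*_x|$ gives, after taking $\E_x[\cdot]$, the bound $1 - \alpha^C \le \eta^C + \delta$, i.e. $\eta^C \ge (1-\alpha^C) - \delta$.

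I do not expect a genuine obstacle here. The only step that needs care is the order of expectations: $\alpha^C$ is defined over the joint (feature, label) distribution, so one must condition on $x$ before invoking the Bernoulli identity, rather than treating $y^*_x$ as deterministic. Once the reformulation $1-\alpha^C = \E_x[|C(x) - p^*_x|]$ is established, the rest is a one-line pointwise triangle inequality followed by linearity/monotonicity of expectation.
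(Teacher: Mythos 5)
Your proposal is correct and is essentially the paper's argument: both rest on the Bernoulli absolute-moment identity (for a binary constant $b$, $\E_{y^*_x \sim \mathrm{Bernoulli}(p^*_x)}\left[\,|b - y^*_x|\,\right] = |b - p^*_x|$) combined with the triangle inequality among $C(x)$, the outcome probability/label, and $f^*_x$. The only difference is the order of steps --- you integrate out the label first to get the clean identity $1-\alpha^C=\E_x\left[\,|C(x)-p^*_x|\,\right]$ and then apply the pointwise triangle inequality in both directions, whereas the paper first applies Jensen's and the reverse triangle inequality with the random $y^*_x$ (using that $f^*_x$ is binary) and only then reduces $\E\left[\,|y^*_x - f^*_x|\,\right]$ to $|p^*_x - f^*_x|$ --- a cosmetic distinction.
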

\begin{proof}
 
Using the definition of the accuracy, $\alpha^C$, of classifier $C$, we get 
\begin{align}
\alpha^C & =  \E_{(x, y^*_x)} \left[ \mathbbm{1}_{C(x) = y^*_x} \right] =   \E_{(x, y^*_x)} \left[ 1 - |C(x) - y^*_x| \right] \nonumber \\
& = 1 -  \E_{(x, y^*_x)} \left[ |C(x) - y^*_x| \right] \label{eq:accuracy}
\end{align}
Analogously,  the degree of infra-marginality, $\eta^C$, can be expressed as
\begin{align}
\eta^C & = \E_x \left[ \left|C(x) - f^*_x \right| \right] =  \E_{(x, y^*_x)} \left[ \left|C(x) - f^*_x \right| \right]  \label{eq:fairness}
\end{align}
Summing (\ref{eq:accuracy}) and (\ref{eq:fairness}) gives us $\eta^C + \alpha^C =  1 + \E \left[ \left|C(x) - f^*_x \right| - \left|C(x) - y^*_x \right| \right]$.

Subtracting one from both the sides of the previous equality and considering absolute values, we obtain 
\begin{align}
\left| \eta^C - \left( 1 - \alpha^C \right) \right| & = \bigl\lvert  \ \E \left[ \left| C(x) - f^*_x \right| - \left|C(x) - y^*_x \right| \right] \ \bigr\rvert \nonumber \\
& \leq \E \left[\bigl\lvert \  \left|C(x) - f^*_x \right| - \left|C(x) - y^*_x \right| \ \bigr\rvert \right] \tag{Jensen's inequality} \nonumber \\
& = \E \left[ \left| y^*_x - f^*_x \right| \right] \label{eq:abs-diff}
\end{align}
The last equation follows from the triangle inequality, $|C(x) - f^*_x| \leq |C(x) - y^*_x | + |y^*_x - f^*_x|$ and $|C(x) - y^*_x| \leq |C(x) - f^*_x | + |f^*_x - y^*_x|$. 

Recall that, for each $x$, we have $y^*_x \sim {\rm Bernoulli} (p^*_x)$ and the label $f^*_x$ can be either zero or one. In both of these cases, we have $\E_{y^*_x \sim {\rm Bernoulli} (p^*_x)} \left[ \left| y^*_x - f^*_x \right| \right] = \left| p^*_x - f^*_x \right|$. Therefore, the desired bound holds 
\begin{align*}
\left| \eta^C - \left( 1 - \alpha^C \right) \right| & \leq  \E_{(x, y^*_x)} \left[ \left| y^*_x - f^*_x \right| \right] \tag{via  \eqref{eq:abs-diff}} \\
& = \E_x \left[ \left| p^*_x - f^*_x \right| \right] \\
& = \delta \tag{via \eqref{eq:ell1}}
\end{align*}
\end{proof}

This result shows that, for any classifier, high accuracy points to low infra-marginality. Notably this result applies group-wise: this ``inverse'' connection holds even if we consider accuracy and infra-marginality separately for different subsets (groups) of the data set $\mathcal{X}$. Specifically, Theorem~\ref{theorem:main} can be applied, as is, to obtain Corollary~\ref{cor:groups}. Here, say the set of data points is (exogenously) partitioned into two groups (i.e., disjoint subsets) $\mathcal{X}_1$ and $\mathcal{X}_2$. Also, let the outcome probabilities of the two sets be $(p^*_x)_{x \in {\mathcal{X}_1}}$ and $(q^*_y)_{y \in \mathcal{X}_2}$, respectively, along with the single-threshold labels $( f^*_x )_{x \in \mathcal{X}_1}$ and $( g^*_x )_{y \in \mathcal{X}_2}$. Let $\delta_1$ ($\delta_2$) denote the $\ell_1$ distance between $(p^*_x)_{x \in {\mathcal{X}_1}}$ and $( f^*_x )_{x \in \mathcal{X}_1}$ ($(q^*_y)_{y \in \mathcal{X}_2}$ and $( g^*_x )_{y \in \mathcal{X}_2}$). Following the above-mentioned notational conventions, the accuracy and infra-marginality measures of a classifier $C: \mathcal{X} \mapsto \{0, 1\}$ for group $i \in \{1,2\}$ are $\alpha^C_{i}$ and $\eta^C_{i}$, respectively. With this notation in hand, we have the following group-wise guarantee.

\begin{corollary}
\label{cor:groups}
For a data set $\mathcal{X}$, comprised of two groups $\mathcal{X}_1$ and $\mathcal{X}_2$, and any classifier $C: \mathcal{X} \mapsto \{0, 1\}$, the degree of infra-marginality, in each group, satisfies the following bounds: $\left| \eta^C_1 -  \left(1 - \alpha^C_1 \right) \right| \leq  \delta_1$ and $\left| \eta^C_2 -  \left(1 - \alpha^C_2 \right) \right| \leq  \delta_2$.
\end{corollary}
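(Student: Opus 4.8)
The plan is to observe that Corollary~\ref{cor:groups} follows immediately from Theorem~\ref{theorem:main} applied twice, once to each group viewed as a self-contained instance. Concretely, I would first fix $i \in \{1,2\}$ and reinterpret the group $\mathcal{X}_i$ as a data set in its own right: the relevant feature distribution is the original feature distribution conditioned on the event $\{x \in \mathcal{X}_i\}$, the generative (outcome) probabilities are $(p^*_x)_{x \in \mathcal{X}_1}$ (resp.\ $(q^*_y)_{y \in \mathcal{X}_2}$), and the single-threshold labels are $(f^*_x)_{x \in \mathcal{X}_1}$ (resp.\ $(g^*_x)_{y \in \mathcal{X}_2}$), obtained by thresholding at the same $\tau$.

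Second, I would unwind definitions to check that the group-wise quantities $\alpha^C_i$ and $\eta^C_i$ are exactly the accuracy and the degree of infra-marginality of $C$, in the sense of \eqref{eq:infra} and the surrounding text, but with every expectation taken over this conditioned feature distribution; likewise $\delta_i$ is precisely the weighted $\ell_1$ distance of the restricted instance in the sense of \eqref{eq:ell1}. With these identifications in place, Theorem~\ref{theorem:main} applies verbatim to each restricted instance and yields $(1-\alpha^C_i) - \delta_i \leq \eta^C_i \leq (1-\alpha^C_i) + \delta_i$, i.e.\ $\left| \eta^C_i - (1-\alpha^C_i) \right| \leq \delta_i$, for $i = 1,2$; taking the two statements together gives the claimed bounds.

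Alternatively, one can avoid invoking the theorem as a black box and simply re-run its proof with every expectation conditioned on $\{x \in \mathcal{X}_i\}$: the three ingredients used there --- linearity to write accuracy as $1 - \E\left[|C(x) - y^*_x|\right]$, Jensen's and the triangle inequality to bound $\left|\eta^C - (1-\alpha^C)\right|$ by $\E\left[|y^*_x - f^*_x|\right]$, and the identity $\E_{y^*_x \sim {\rm Bernoulli}(p^*_x)}\left[|y^*_x - f^*_x|\right] = |p^*_x - f^*_x|$ valid for $f^*_x \in \{0,1\}$ --- are all insensitive to whether the underlying support is $\mathcal{X}$ or a subset of it.

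I do not expect a genuine obstacle here; the only point requiring a little care is the bookkeeping --- confirming that the conditional feature distribution on each group is the correct object, and that the partition $\mathcal{X} = \mathcal{X}_1 \sqcup \mathcal{X}_2$ lets $\delta_1$ and $\delta_2$ be defined independently of the global $\delta$ (in fact $\delta$ is the convex combination of $\delta_1$ and $\delta_2$ weighted by the two group masses, but this relation is not needed for the corollary).
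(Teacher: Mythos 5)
Your proposal is correct and matches the paper's approach: the paper likewise obtains Corollary~\ref{cor:groups} by applying Theorem~\ref{theorem:main} as is to each group $\mathcal{X}_i$ treated as its own data set (with the feature distribution conditioned on that group), exactly as you describe. Your extra bookkeeping on the conditional distributions and the remark that $\delta$ is the mass-weighted combination of $\delta_1,\delta_2$ are fine but not needed.
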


\noindent
{\bf Remark:} Corollary~\ref{cor:groups} provides some useful insights towards achieving low infra-marginality. It quantitatively highlights the principle that---for mitigating group-wise infra-marginality---group-wise accuracy can be a better metric than overall accuracy. That is, in relevant settings, aiming for classifiers that maximize the minimum accuracy across groups (i.e., adopting a Rawlsian perspective on accuracy) can lead to more fair decisions than solving for, say, classifiers that enforce the same accuracy across groups or classifiers that maximize overall accuracy. In particular, Corollary~\ref{cor:groups} ensures that a max-min (Rawlsian) guarantee on accuracy translates into a max-min guarantee on infra-marginality, with additive errors at most $\delta_i$. \\

The following two propositions identify relevant settings wherein Theorem~\ref{theorem:main} can be applied.  The proofs of these propositions are direct and have been provided in the supplementary materials. The first proposition addresses data sets in which the probabilities $p^*_x$s (across data points $x \in \mathcal{X}$) are spread enough and do not sharply peak around a specific value. Formally, we say that a data set $\mathcal{X}$ is $\lambda$-Lipschitz if for any $ z \in [0, 1]$, the number of data points with $p^*_x \in \left[z, z + \frac{1}{|\mathcal{X}|} \right]$ is at most $\lambda$. Note that, for settings in which the cdf $F_{\mathcal{D}_{p^*}}$ of $\mathcal{D}_{p^*}$ is smooth, the maximum slope of the cdf corresponds to the Lipschitz constant of the data set. 

\begin{proposition}
If a data set $\mathcal{X}$ is $\lambda$-Lipschitz and the underlying fair-threshold $\tau = 0.5$, then the $\ell_1$ distance between the outcome probabilities $(p^*_x)_{x \in \mathcal{X}}$ and the single-threshold labels $( f^*_x )_{x \in \mathcal{X}}$ is at most $\lambda/4$. Here, we assume that the underlying feature distribution selects instances from $\mathcal{X}$ uniformly at random.
\end{proposition}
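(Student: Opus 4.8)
The plan is to unwind $\delta$ under the two hypotheses. Since the feature distribution is uniform and $\tau = 1/2$, we have $\delta = \| \mathcal{D}_{p^*} - \mathcal{D}_{f^*}\|_1 = \frac{1}{|\mathcal{X}|}\sum_{x \in \mathcal{X}} |p^*_x - f^*_x|$. The elementary observation that drives everything is that with threshold $\tau = 1/2$ the fair label is $f^*_x = 1$ exactly when $p^*_x > 1/2$, so that $|p^*_x - f^*_x| = \min\{p^*_x,\, 1-p^*_x\}$, i.e.\ the distance from $p^*_x$ to the nearer of the two endpoints $0$ and $1$. In particular $|p^*_x - f^*_x| \le 1/2$ for every $x$, and the data points that contribute most to $\delta$ are precisely those with $p^*_x$ close to $1/2$ --- which is exactly the clustering that the $\lambda$-Lipschitz hypothesis forbids.

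I would make this quantitative with a layer-cake (Fubini) identity. For each $x$, $\min\{p^*_x, 1-p^*_x\} = \int_0^{1/2} \mathbbm{1}\{\theta < p^*_x < 1-\theta\}\, d\theta$, and averaging over $x$ gives $\delta = \frac{1}{|\mathcal{X}|}\int_0^{1/2} N(\theta)\, d\theta$, where $N(\theta) := |\{x \in \mathcal{X} : p^*_x \in (\theta,\, 1-\theta)\}|$. The interval $(\theta, 1-\theta)$ has length $1 - 2\theta$; covering it by roughly $(1-2\theta)|\mathcal{X}|$ windows of length $1/|\mathcal{X}|$, each of which (by $\lambda$-Lipschitzness) contains at most $\lambda$ of the $p^*_x$'s, yields $N(\theta) \le \lambda (1-2\theta)|\mathcal{X}|$ (up to lower-order rounding, discussed below). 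Substituting,
\[
\delta \;\le\; \lambda \int_0^{1/2} (1 - 2\theta)\, d\theta \;=\; \lambda\Big[\theta - \theta^2\Big]_0^{1/2} \;=\; \frac{\lambda}{4},
\]
which is the claimed bound. Equivalently, one may write $\delta = \int_0^1 \min\{t, 1-t\}\, dF_{\mathcal{D}_{p^*}}(t)$ and use that the Lipschitz condition caps the local mass of $\mathcal{D}_{p^*}$ by a $\lambda$-multiple of Lebesgue measure, so $\delta \le \lambda \int_0^1 \min\{t, 1-t\}\, dt = \lambda/4$.

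I expect the only real obstacle to be bookkeeping rather than ideas. The hypothesis is phrased for the discrete counting measure on windows of width exactly $1/|\mathcal{X}|$, so covering the open interval $(\theta, 1-\theta)$ by such windows introduces a ceiling: the honest bound is $N(\theta) \le \lambda \lceil (1-2\theta)|\mathcal{X}| \rceil$, which carries an additive slack that is negligible relative to the main term but must be addressed to land exactly on $\lambda/4$. This slack disappears in the smooth-cdf regime noted just before the proposition (where $\lambda$ is literally the maximal slope of $F_{\mathcal{D}_{p^*}}$), and in the general discrete case I would track it by working directly with the cumulative count $|\mathcal{X}|\, F_{\mathcal{D}_{p^*}}$, whose increment over any length-$\ell$ interval is at most $\lambda \lceil \ell |\mathcal{X}|\rceil$, and observing that the dominant contribution is precisely the integral $\lambda \int_0^{1/2}(1-2\theta)\,d\theta$ computed above. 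Everything else is a one-line integral.
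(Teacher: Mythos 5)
Your proposal is correct, and it rests on the same two ingredients as the paper's own proof: the observation that at $\tau=1/2$ one has $|p^*_x-f^*_x|=\min\{p^*_x,\,1-p^*_x\}$, and the fact that $\lambda$-Lipschitzness caps the number of $p^*$-values in any window of width $1/|\mathcal{X}|$ by $\lambda$. The difference is purely in bookkeeping: the paper partitions $[0,1]$ into $N=|\mathcal{X}|$ bins of width $1/N$, bounds each point's contribution by the extreme value attainable in its bin, and sums, landing on $\tfrac{\lambda}{N^2}\bigl(\sum_{i\le N/2-1} i+\sum_{i\le N/2+1} i\bigr)=\lambda/4+o(1)$; you instead integrate the level-set counts $N(\theta)$ over $\theta\in[0,1/2]$ (layer cake) and bound $N(\theta)\le\lambda\lceil(1-2\theta)N\rceil$, which after integration gives $\lambda/4+O(\lambda/N)$. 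These are two organizations of the same estimate---yours makes the ``discrete density at most $\lambda$ times Lebesgue, then $\int_0^1\min\{t,1-t\}\,dt=1/4$'' heuristic explicit, the paper's Riemann-sum version hides the same rounding inside the bin endpoints. Regarding the slack you flag as the one obstacle: you should not expect to eliminate it, and the paper does not either---its displayed computation also ends at $\lambda/4+o(1)$, and the exact bound $\lambda/4$ can genuinely fail for very small data sets (a single data point with $p^*_x=1/2$ is $1$-Lipschitz yet has $\ell_1$ distance $1/2>1/4$). So the additive $O(\lambda/|\mathcal{X}|)$ term in your version puts you on exactly the same footing as the paper, and there is no missing trick you need to find.
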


\begin{proof}
We partition $[0,1]$ into $N = |\mathcal{X}|$ subintervals, of length $1/|\mathcal{X}| = 1/N$ each. Specifically, the subintervals are $\left[ \frac{i-1}{N}, \frac{i}{N} \right]$, with integer $i \in \{1, 2, \ldots, N\}$. The Lipschitz condition ensures that the number of data points in each subinterval is at most $\lambda$. 

\begin{align*}
\| \mathcal{D}_{p^*} - \mathcal{D}_{f^*} \|_1 & \leq \frac{1}{N}\sum_{i = 1}^{N/2-1} \frac{i}{N} \lambda  \ + \ \frac{1}{N}\sum_{i = N/2}^N \left( 1 - \frac{i-1}{N} \right) \lambda \\
& = \frac{\lambda}{N^2} \left( \sum_{i = 1}^{N/2-1} i + \sum_{i = 1}^{N/2+1} i \right) = \frac{\lambda}{4} + o(1) 
\end{align*} 
\end{proof}

The next proposition observes that if, in $\mathcal{D}_{p^*}$, the probability mass is spread sufficiently far away from the threshold, then again the $\ell_1$ distance between the outcome probabilities and the single-threshold labels is appropriately bounded. The result shows that Theorem~\ref{theorem:main} is useful, in particular, when $\mathcal{D}_{p^*}$ is a bimodal distribution, with the two modes being close to zero and one, respectively. Formally, we will say that a distribution (supported on $[0,1]$) is \emph{$(\delta, q)$-spread}, with respect to the threshold $\tau=0.5$, iff, under $\mathcal{D}_{p^*}$, the probability mass in the interval $[0.5 - \delta, 0.5 + \delta]$ is at most $(1-q)$. Here, $0 \leq \delta \leq 1/2$ and $q \in (0, 1)$.  

\begin{proposition}
If, for a data set, the outcome probability distribution is $(\delta, q)$-spread and the underlying fairness threshold $\tau = 0.5$, then the $\ell_1$ distance between the outcome probabilities $(p^*_x)_{x \in \mathcal{X}}$ and the single-threshold labels $( f^*_x )_{x \in \mathcal{X}}$ is at most $1/2 - \delta q $. Here, we assume that the underlying feature distribution selects instances from $\mathcal{X}$ uniformly at random.
\end{proposition}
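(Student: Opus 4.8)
The plan is to bound the pointwise contribution $|p^*_x - f^*_x|$ in two regimes---points whose outcome probability is close to the threshold, and points whose probability is bounded away from it---and then average, using the $(\delta,q)$-spread hypothesis to control how much mass can fall in the ``bad'' (near-threshold) regime.

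First I would record the elementary pointwise bound. For every $x$, since $f^*_x = 1$ exactly when $p^*_x > 1/2$ and $f^*_x = 0$ otherwise, we have $|p^*_x - f^*_x| = 1 - p^*_x$ if $p^*_x > 1/2$ and $|p^*_x - f^*_x| = p^*_x$ if $p^*_x \le 1/2$; in either case $|p^*_x - f^*_x| \le 1/2$. Next I would sharpen this for points outside the interval $[1/2 - \delta, 1/2 + \delta]$: if $p^*_x < 1/2 - \delta$ then $f^*_x = 0$ and $|p^*_x - f^*_x| = p^*_x < 1/2 - \delta$, while if $p^*_x > 1/2 + \delta$ then $f^*_x = 1$ and $|p^*_x - f^*_x| = 1 - p^*_x < 1/2 - \delta$. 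So the per-point error is at most $1/2$ always, and strictly below $1/2 - \delta$ away from the threshold band.

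Then I would split the expectation according to this partition. Let $A$ denote the fraction of data points with $p^*_x \in [1/2 - \delta, 1/2 + \delta]$; since the feature distribution is uniform, this fraction equals the probability mass that $\mathcal{D}_{p^*}$ places on that interval, and the $(\delta,q)$-spread assumption gives $A \le 1-q$. Using the two bounds above,
\begin{align*}
\| \mathcal{D}_{p^*} - \mathcal{D}_{f^*} \|_1 \;=\; \E_x\big[|p^*_x - f^*_x|\big] \;\le\; A \cdot \tfrac12 + (1-A)\left(\tfrac12 - \delta\right) \;=\; \tfrac12 - (1-A)\,\delta \;\le\; \tfrac12 - q\,\delta,
\end{align*}
where the final inequality uses $1 - A \ge q$ together with $\delta \ge 0$. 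This is exactly the claimed bound $1/2 - \delta q$.

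There is no serious obstacle here; the argument is a one-line case analysis followed by averaging. The only point that needs a little care is the boundary behaviour of $f^*$ at $p^*_x = 1/2$ and at $p^*_x = 1/2 \pm \delta$: the strict-versus-non-strict inequality in the definition of $f^*_x$ means points exactly at $1/2$ contribute the full $1/2$, which is why the first (non-sharp) bound is stated with $\le$ rather than $<$; since those points can be absorbed into the near-threshold band, the estimate is unaffected.
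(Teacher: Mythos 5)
Your proof is correct and follows essentially the same reasoning as the paper: points in the band $[1/2-\delta,\,1/2+\delta]$ contribute at most $1/2$ each, points outside contribute at most $1/2-\delta$, and the $(\delta,q)$-spread condition weights these by at most $1-q$ and at least $q$ respectively, yielding $(1-q)\tfrac12 + q(\tfrac12-\delta) = \tfrac12 - \delta q$. Your explicit pointwise case analysis in fact makes rigorous what the paper states only as an extremal-configuration claim, so no issues remain.
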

\begin{proof}
If $\mathcal{D}_{p^*}$ is $(\delta, q)$-spread, them the $\ell_1$ distance (between this distribution and $\mathcal{D}_{f^*}$) is maximized when $(1-q)$ fraction of the data points have $p^*$ value equal to $0.5$ and the rest of the data points (accounting for the remaining $q$ fraction) have $p^*$ value equal to $0.5 - \delta$ (or $0.5 + \delta$). Hence, the $\ell_1$ distance between $(p^*_x)_{x \in \mathcal{X}}$ and $( f^*_x )_{x \in \mathcal{X}}$ is upper bounded as follows: $ (1-q)  0.5 + q  ( 0.5 - \delta) = 1/2 - \delta q$. 
\end{proof}

\subsection{Optimal Classifiers Under Infra-marginality Constraints}
\label{section:opt-class}
The above subsection characterized infra-marginality under a fixed classifier threshold $\tau$. However, in practice, it is possible to choose $\tau$ to ensure high accuracy and also a minimum degree of infra-marginality.   
Therefore, we now present an efficient algorithm for finding classifiers with as high an accuracy as possible, under infra-marginality constraints. We address this optimization problem in a setup wherein the data set $\mathcal{X}$ is finite and outcome probabilities, $p^*$s, are known explicitly. In this setup (i.e., given $(p^*_x)_{x \in \mathcal{X}}$), prominent group-wise fairness notions map to non-convex constraints. Hence, maximizing accuracy subject to a group-wise fairness constraint typically requires relaxations (leading to approximate solutions) or heuristics; see, e.g.,~\cite{celis2018classification} and references therein. We will show that, by contrast, an upper bound on infra-marginality can be expressed as a linear constraint and, hence, we can efficiently find an optimal (with respect to accuracy) classifier that satisfies a specified infra-marginality bound. 

Let $(x, y^*_x)$ be a random sample from the feature distribution on $\mathcal{X} \times \{0, 1\}$, where $\mathcal{X}$ is the data set/feature space and the label $y^*_x \sim {\rm Bernoulli} (p^*_x)$, for each $x$. Recall that, given a universal threshold $\tau$ (which imposes the single-threshold fairness criterion), we define the label $f^*_x \coloneqq \mathbbm{1}\left\{ p^*_x > \tau \right\}$, for each $x$. Let $C: \mathcal{X} \mapsto \{0, 1\}$ be any classifier with accuracy $\alpha^C \coloneqq \E_{(x, y^*_x)} \left[ \mathbbm{1}\{ {C(x) = y^*_x} \} \right]$. Furthermore, the degree of infra-marginality of $C$ is defined as 
$\eta^C \coloneqq \E_{x} \left[ \left|C(x) - f^*_x \right| \right]$.

Given parameter $\eta$, we consider the problem of maximizing the accuracy, over all classifiers, subject to the constraint that the degree of infra-marginality is at most $\eta$. 

\begin{align*}
\text{maximize} \quad & \E_{(x, y^*_x)} \left[ \mathbbm{1}\{C(x) = y^*_x\} \right]  \quad \text{over $C: \mathcal{X} \rightarrow \{0, 1\}$} \\
\text{subject to} \quad &   \E_{x} \left[ \left|C(x) - f^*_x \right| \right] \leq \eta.
\end{align*}

Observe that, because $C: \mathcal{X} \mapsto \{0, 1\}$ and $y^*_x$ is binary valued, the objective function can be expressed as 
\begin{align*}
 \E_{(x, y^*_x)} \left[ \mathbbm{1} \{C(x) = y^*_x\} \right] & = \E_{(x, y^*_x)} \left[ C(x) y^*_x + (1 - C(x))(1-y^*_x) \right] \\
& = \E_x \left[ C(x) p^{*}_x + (1 - C(x))(1 - p^{*}_x) \right]\\
& = 1 - \E_x \left[ p^{*}_x \right] + \E_x \left[ {(2p^{*}_x - 1) C(x)} \right]
\end{align*}

Similarly, using the fact that $C: \mathcal{X} \mapsto \{0, 1\}$ and $f^*_x$ is binary valued, for infra-marginality we get 
\begin{align*}
\E_{x} \left[ \left|C(x) - f^*_x \right| \right] & = \E_x \left[ 1- C(x) f^*_x - (1 - C(x)) (1 - f^*_x) \right] \\
& = \E_x \left[ (1- 2 f^*_x) C(x) \right] + \E_x \left[ f^*_x \right] 
\end{align*}

Therefore, the above maximization of accuracy (or minimization of error rate) subject to the degree of infra-marginality being upper bounded by $\eta$, over classifiers $C: \mathcal{X} \mapsto \{0, 1\}$, can be equivalently rewritten as follows. Write $\overline{f} \coloneqq \E_x \left[ f^*_x \right]$ and note that $\overline{f}$, $\E_x [p^*_x ]$, and $\eta$ do not depend on the classifier $C$.
\begin{align*}
\text{minimize} \quad & \quad \E_x [ (1 - 2p^{*}_x ) C(x) ] \quad \text{over $C: \mathcal{X} \rightarrow \{0, 1\}$} \\
\text{subject to} \quad & \quad \E_x [ (2 f^*_x - 1) C(x) ] \geq \overline{f} - \eta.
\end{align*}

This is an integer linear program with (binary) decision variables $\{ C(x) \}_{x\in \mathcal{X}}$. Now consider its linear programming relaxation by letting $C: \mathcal{X} \mapsto [0, 1]$, and denote its optimum by $\alpha_{LP}^{*}$. Now, using Lagrange multipliers and strong duality, the optimum $\alpha_{LP}^{*}$ of the linear relaxation is given by
\begin{align*}
\alpha_{LP}^{*} & = \min_{C: \mathcal{X} \mapsto [0, 1]} \max_{\lambda \geq 0} \E_x [(1 - 2p^{*}_x) C(X) ]   -   \lambda \left( \E_x [ (2 f^*_x  -  1) C(x) ]   -  \overline{f}  +  \eta \right) \\
& = \max_{\lambda \geq 0} \ \min_{C: \mathcal{X} \mapsto [0, 1]}~ \E_x \left[ \left( (1 - 2p^{*}_x)  - \lambda (2 f^*_x - 1) \right) C(x) \right] + \lambda (\overline{f} - \eta) 
\end{align*}

For any fixed $\lambda$, the optimal classifier $C: \mathcal{X} \mapsto [0,1]$ is given by $C(x) = \mathbbm{1} \{ r_\lambda(x) < 0 \}$, where $r_\lambda(x) \coloneqq \left( (1 - 2p^{*}_x)  - \lambda (2 f^*_x - 1) \right)$. Therefore, 
\begin{align}
\alpha^*_{LP} & = \max_{\lambda \geq 0} \ \E_x \left[ r_\lambda(x) \ \mathbbm{1} \{ r_\lambda(x) < 0 \} \right] + \lambda (\overline{f} - \eta) \nonumber \\
& = \max_{\lambda \geq 0} \ \E_x \left[ \min \left\{ 0, r_\lambda(x) \right\} \right] + \lambda (\overline{f} - \eta) \label{eq:opt-lambda}
\end{align} 

With probabilities $(p^*_x)_{x \in \mathcal{X}}$ in hand, we can solve the dual of the linear programming relaxation to efficiently compute the optimal solution $\lambda^*$ of \eqref{eq:opt-lambda}, i.e., compute an optimal value of the Lagrange multiplier (dual variable). Given the optimal $\lambda^{*}$, we know that the optimal classifier $C^{*}: \mathcal{X} \rightarrow [0, 1]$ satisfies  
\begin{align*}
C^{*}(x) = \mathbbm{1}\left\{ r_{\lambda^*}(x) < 0 \right\} =  \mathbbm{1} \left\{ p^*_x + \lambda^*  f^*_x \geq \frac{1}{2} + \frac{\lambda^*}{2} \right\}\text{for all $x \in \mathcal{X}$}.
\end{align*}
In other words,
\begin{align*}
C^{*}(x) = \begin{cases} \mathbbm{1} \left\{ p^{*}_x \geq (1 + \lambda^{*})/2 \right\}, \quad \text{if $p^{*}_x \leq \tau$} \\ \mathbbm{1} \left\{ p^{*}_x \geq (1 - \lambda^{*})/2\right\}, \quad \text{if $p^{*}_x > \tau$} \end{cases}.
\end{align*}

Note that the optimal solution $C^{*}$ of the linear programming relaxation actually yields a binary classifier $C^{*}: \mathcal{X} \rightarrow \{0, 1\}$, which means that $C^{*}$ is also an optimal solution of the underlying integer linear program.\\

\noindent
{\bf Remark:} It is well-known that the Bayes classifier, giving the decisions $\mathbbm{1} \left\{ p^*_x \geq 1/2 \right\}$, achieves maximum accuracy over all classifiers. Our derivation shows that, even with infra-marginality constraints, optimal classifiers continue to be single-threshold classifiers. Also, it is relevant to note that the above-mentioned method can be used to efficiently solve the problem of minimizing infra-marginality subject to a (specified) lower bound on accuracy. 

\section{Empirical Evaluation}
\label{sec:exp}
In this section, our primary aim is to provide empirical evidence to the developed theoretical guarantees and answer the following questions:

\begin{itemize}
\item Does high accuracy (low classification error) lead to low infra-marginality for a classifier?
\item How does the relationship between classification-error and infra-marginality change when we consider them separately for each protected group?
\item What is the nature of trade-off between optimizing a classifier for group fairness measures versus infra-marginality? 
\end{itemize}

\subsection{Experimental Setup}
\label{sec:subexp}
We answer these questions by measuring accuracy, infra-marginality and group fairness metrics for machine learning classifiers under a wide range of datasets. Since estimation of infra-marginality depends on knowledge of the true outcome distribution, we first present results on simulated datasets where we control the data generation process. The distributions are chosen to provide a thorough understanding of the relationship between accuracy and degree of \textit{infra-marginality}. We then present results on two real-world datasets that have been used in prior empirical work on algorithmic fairness: \texttt{Adult Income}~\cite{url:adultIncome} (for predicting annual income) and \texttt{MEPS}~\cite{url:meps} (for predicting utilization using medical expenditure data). These datasets satisfy the two properties that are required for empirical application of infra-marginality. First, the outcomes measured are numerical quantities that are less likely to be subjectively biased and second, they can be assumed to be a representative sample of the underlying population. The \texttt{Adult-Income} dataset contains  a sample of adults in the United States based on the Census data from 1994 and the \texttt{MEPS} dataset is derived from a nationally representative survey sample of people's healthcare expenditure in the US.        

\noindent \textbf{Measuring infra-marginality. }On the simulated datasets, we estimate infra-marginality using its definition in Equation~\ref{eq:infra}. On real-world datasets, we do not have true outcome probability and thus use an approximation. We assume that the estimated outcome probability ($\hat{p}_x$) using a classifier can be considered as a proxy for the true outcome probability. Essentially, this assumption implies that all the relevant variables for estimating the outcome are available in the dataset (and that we have an optimal learning algorithm to estimate the outcome probability).
\begin{align}
    \eta^C \coloneqq \E_{x} \left[ \left|C(x) - \mathbbm{1}{[\hat{p}_x>\tau]} \right| \right]
\end{align}
To assess the sensitivity of results to settings where the true outcome probability $p^*_x$ is different from the one estimated by the learnt classifier, we develop a method: we progressively remove features from a given dataset, train a classifier on the reduced dataset, and compare infra-marginality on the outcome probability estimated from this reduced dataset to the ``true'' infra-marginality on the outcome probability estimated from the full dataset (thus assuming it as the true outcome probability). That is, we use the following measure for infra-marginality. 
\begin{align}
    \eta^C \coloneqq \E_{x'} \left[ \left|C(x') - \mathbbm{1}{[\hat{p}_{\mathit{all\_features}}>\tau]} \right| \right] \textit{s.t. }X' \subset X 
\end{align}

While we demonstrate the theoretical results using the above assumption on the real-world datasets, in practice we recommend a combination of domain knowledge and active data collection to estimate true outcome probabilities and correspondingly, the true infra-marginality. We discuss these possibilities in Section~\ref{sec:conclude}.

\noindent \textbf{Measuring group-fairness. }
For each classifier on the simulated and real-world datasets, we compare infra-marginality to prominent group-fairness metrics. We do so using the \textit{meta-fair framework} proposed by Celis et al.~\cite{celis2018classification}. In this framework, there is a trade-off parameter which helps to balance between maximizing accuracy and achieving group-fairness. Higher the value of the parameter, the more is the focus on achieving group-fairness. When this parameter is $0$, the classifier maximizes accuracy with no group-fair constraints. For our evaluation, we consider two group-fair notions:
\begin{itemize}
    \item \emph{Statistical Parity (SP) or Disparate Impact (DI):} The aim is to achieve low value for the expression $\left(1-\min\left\{\frac{SR_{1}}{SR_{0}},\frac{SR_{0}}{SR_{1}}\right\}\right)$, where $SR_z$ denotes the \textit{selection-rate} for group $z$, which is the fraction of individuals who received favorable outcome by the classifier, within group $z$.
    \item \emph{Equal False Discovery Rates:} The aim is to achieve low value for $\left(1-\min\left\{\frac{FDR_{1}}{FDR_{0}},\frac{FDR_{0}}{FDR_{1}}\right\}\right)$, where $\mathit{FDR}_z$ denotes the false discovery rate for group $z$, which is the fraction of individuals incorrectly classified, among the group $z$ who received favorable outcomes by the classifier.
\end{itemize}
We use the implementation of the meta-fair algorithm, provided in the Python package AI Fairness 360~\cite{aif360-oct-2018}. 
Finally, we report the classification error rate, i.e., (1 - accuracy), degree of infra-marginality, and the value of the group (un)fairness metric.

\subsection{Simulation-Based Datasets}
For simplicity, we consider datasets with a single sensitive attribute (e.g., race or gender) and two additional attributes (e.g., age, income, etc.) that denote relevant features for an individual. We assume that the sensitive attribute is binary and additional attributes are continuous. We also assume that the outcome is binary and depends on the attributes of an individual. Given a classifier that predicts the outcome based on these features, our central goal is to compare its misclassification rate, infra-marginality and group-fairness.    

Specifically, we assume that the sensitive attribute $Z \in \{0,1\}$, two additional attributes $U$, $V$ are real-valued,  and the outcome is binary $Y\in \{0,1\}$. Hence, the input space is denoted as $\mathbb{X}=\{0,1\}\times \mathbb{R} \times \mathbb{R}$. We create datasets using a generative model where the attributes of an individual are simulated based on their sensitive attribute and outcome. The overall population distribution is generated as 
$\mathcal{P}(U,V,Z,{Y})=\mathcal{P}(U,V|Z,{Y})\cdot \mathcal{P}(Z|{Y})\cdot \mathcal{P}({Y})$. We further consider that $U$ and $V$ are conditionally independent given $Z$ and ${Y}$, i.e., $\mathcal{P}(U,V|Z,{Y})=\mathcal{P}(U|Z,{Y})\cdot\mathcal{P}(V|Z,{Y})$, and the distributions are considered to be Gaussian. Within this framework, we generate five types of datasets, each with $10000$ instances and equal label distribution $\mathcal{P}(Y=1)=\mathcal{P}(Y=0)=0.5$. For each instance $x \in \mathbb{X}$, we obtain $p^*_x=\mathcal{P}(Y=1|x)$ by applying Bayes' rule. Then we obtain $y^*_x\sim \mathtt{Bernoulli}(p^*_x)$ and $f^*_x = \mathtt{Indicator}(p^*_x > 0.5)$. 
Based on this process, we generate datasets with various outcome probability ($p^*$)  distributions, as shown in Figure~\ref{fig:p-distr}. 

\begin{figure}[ht]
    \centering     
\begin{subfigure}[b]{0.45\textwidth}
    \includegraphics[width=\textwidth]{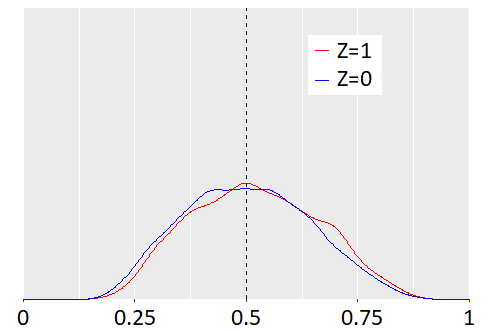}  
    \caption{S1}
    \label{fig:risk_close}
\end{subfigure}     ~ 
\begin{subfigure}[b]{0.45\textwidth} 
    \includegraphics[width=\textwidth]{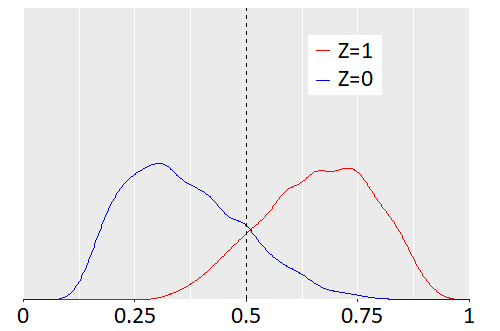} 
    \caption{S2}    
    \label{fig:risk_twoPeaks_smooth} 
\end{subfigure} 
\begin{subfigure}[b]{0.45\textwidth} 
    \includegraphics[width=\textwidth]{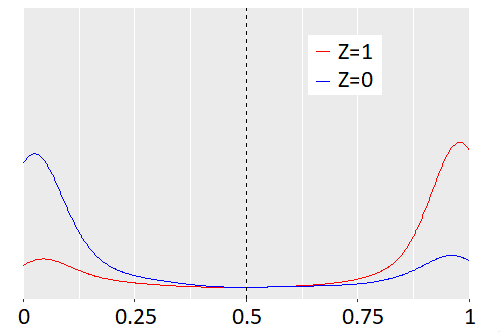} 
   
    \caption{S3}    
    \label{fig:risk_twoPeaks_steep} 
\end{subfigure}  ~
\begin{subfigure}[b]{0.45\textwidth} 
    \includegraphics[width=\textwidth]{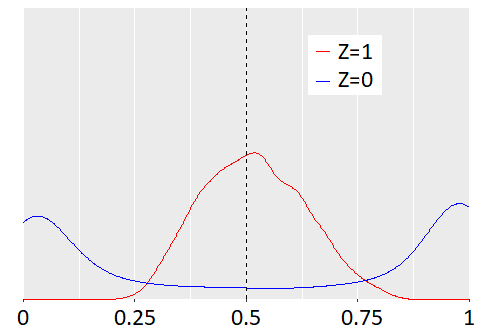} 
    \caption{S4}    
    \label{fig:risk_diff} 
\end{subfigure}   
\caption{True outcome probability distribution for the datasets \textbf{S1, S2, S3, S4}. The x-axis denotes the $p^*$ and y-axis denotes the density $\mathcal{D}_{p^*}$. The blue and red lines represent density curves for the group $Z=0$ and $Z=1$, respectively.}
\label{fig:p-distr} 
\end{figure}

\begin{itemize}[itemsep=0pt,leftmargin=*,topsep=2pt]
    \item \textbf{Dataset S1}: The risk distribution of this dataset is shown in Figure~\ref{fig:risk_close}. For any individual, irrespective of their group membership, the outcome probability is drawn from a density distribution $\mathcal{D}_{p^*}$ which is concentrated near the threshold of $0.5$. Thus, it is difficult to obtain highly accurate classifiers for this kind of outcome distribution. Also, the $\ell_1$ distance between $\mathcal{D}_{p^*}$ and $\mathcal{D}_{f^*}$ is $\delta=0.38$ overall as well as per group.
    \item \textbf{Dataset S2}: We then construct a dataset where the outcome distributions for the two sensitive groups are separated from each other. One group (Z=0) has an outcome probabiliity mode below 0.5, and the other group (Z=1) has a mode on outcome probability greater than 0.5.  The $\mathcal{D}_{p^*}$ distribution of this dataset is described in Figure~\ref{fig:risk_twoPeaks_smooth}. Compared to $S1$, we expect it to be easier for a classifier to achieve high accuracy, but achieving group statistical parity (Disparate Impact) is harder. Here, the two groups have different densities over the outcome probabilities, one primarily concentrated around $0.25$ whereas other is concentrated around $0.75$. Thus, in this dataset, imposing a group-fairness constraint such as disparate impact would necessarily cause more classification error and more degree of infra-marginality.  The $\delta$ value for this dataset is $0.33$ for both the groups.
    \item \textbf{Dataset S3}: This dataset corresponds to an extreme case of $\textbf{S2}$ where the outcome probability for one of the sensitive groups is concentrated near 0 whereas that for the other sensitive group is near 1, as shown in Figure~\ref{fig:risk_twoPeaks_steep}. Also the $\delta$ value is only $0.10$. Achieving high accuracy and low infra-marginality should be extremely easy for a threshold-based classifier, but satisfying group-wise parity may be equally hard. 
    \item \textbf{Dataset S4}: Here we generate data such that the density over outcome probabilities for group $Z$=$1$ is spread sufficiently away from the threshold $0.5$, whereas the density over outcome probabilities for group $Z$=$0$ is concentrated around $0.5$, shown in Figure~\ref{fig:risk_diff}. The $\delta$ values for two groups are also different, $0.41$ and $0.08$ for group $1$ and $0$, respectively. Thus, we expect to see a difference in accuracy by group: a classifier should be able to distinguish individuals from group $Z=0$ easily but find it hard to separate people with $1$ or $0$ outcomes within group $Z=1$. In contrast, any classifier with a single threshold is also expected to satisfy statistical parity (SP).
    \item \textbf{Dataset S5}: In this dataset, the feature $U$ is drawn from the same Gaussian distribution, irrespective of $Y$ or $Z$ values. However, feature $V$ is drawn such that the values clearly separate individuals with different $Y$ values, among group $Z=0$. For $Z=1$ group, it is difficult to accurately classify, as shown in Figure~\ref{fig:apart}.
    
\begin{figure}[ht]
\centering
\includegraphics[width=0.8\columnwidth]{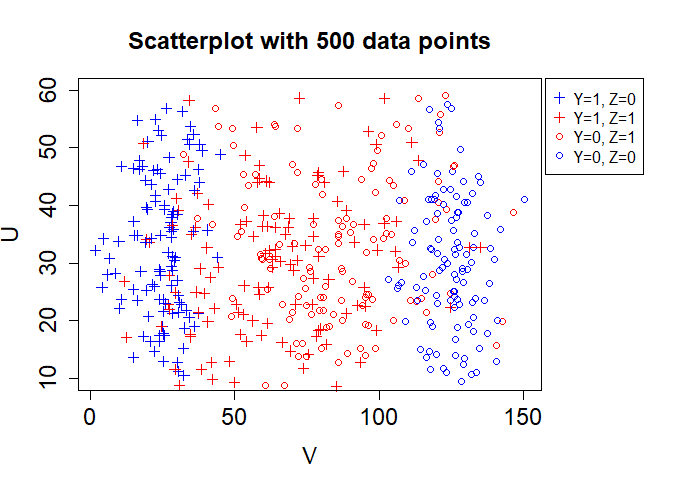}
\caption{Scatterplot for a subset of datapoints from dataset \textbf{S}$\mathbf{5}$.}
\label{fig:apart}
\end{figure}
\end{itemize}

For dataset S5, however, the optimal classifier's decision boundary would be $V=75$, whereas, a classifier ensuring equal misclassification rates for both subgroups would necessarily choose the decision boundary as a horizontal line, say $U=35$. In this case, a large fraction of individuals would receive different decision than that of the optimal classifier, hence would suffer from higher infra-marginality. \\

\noindent \textbf{High accuracy, Low infra-marginality. }
Figure~\ref{fig:delta} illustrates that the degree of infra-marginality is well within the theoretical bound of classification\_error$\pm\ \delta$, as established in Theorem~\ref{theorem:main}. 
In particular, Figure~\ref{fig:risk_close_delta} shows that the classification error is $30\%$ when a classifier is trained to maximize accuracy for dataset S1 and the degree of infra-marginality is bounded by the error $\pm\ 0.38$ ($\delta=0.38$ overall, as well as groupwise). Interestingly, for dataset S2, the degree of infra-marginality is very close to the accuracy even when the theoretically established bound is high, as shown in Figure~\ref{fig:risk_twoPeaks_smooth_delta}. The $\delta$ for the dataset S3 is low ($0.10$), and hence, the infra-marginality and error rates are very close to the each other (the y-axis of Figure~\ref{fig:risk_twoPeaks_steep_delta} is limited to $0.5$ for clarity). The bound of infra-marginality continues to hold in Figures~\ref{fig:diff_delta} and \ref{fig:apart_delta}.\\

\begin{figure}[!ht]
    \centering     
\begin{subfigure}[b]{0.4\textwidth}
    \includegraphics[width=\textwidth]{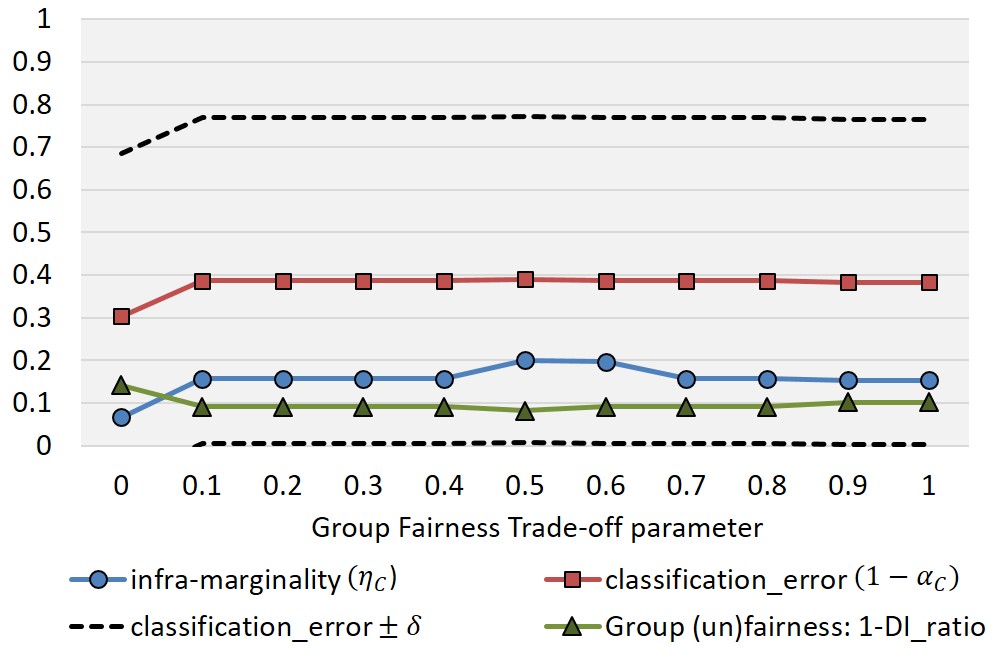}  
    \caption{S1}
    \label{fig:risk_close_delta}
\end{subfigure}     ~ 
\begin{subfigure}[b]{0.4\textwidth} 
    \includegraphics[width=\textwidth]{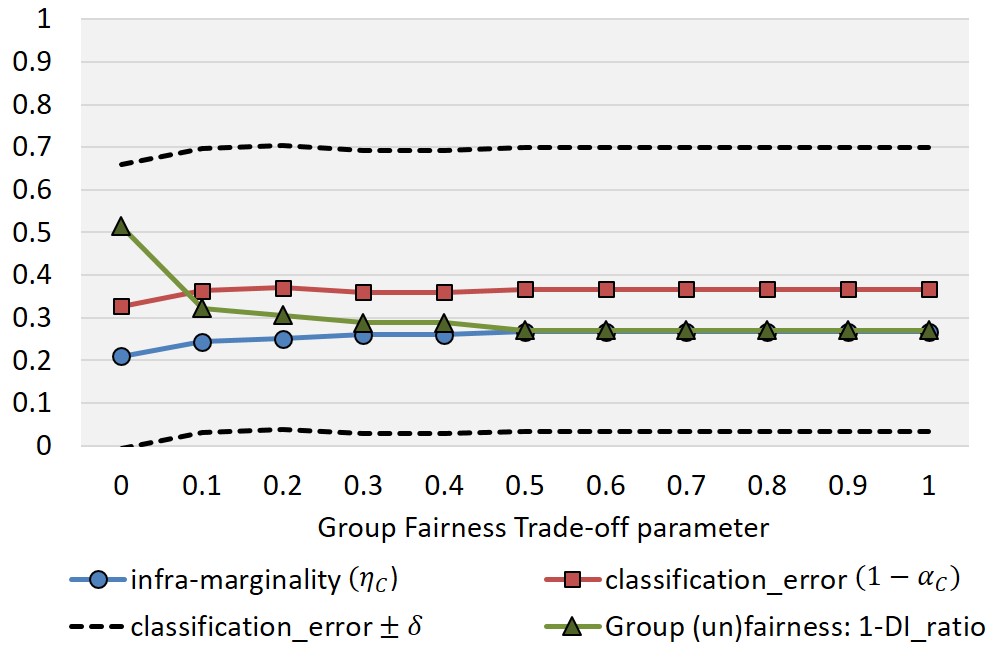} 
    \caption{S2}    
    \label{fig:risk_twoPeaks_smooth_delta} 
\end{subfigure} 
\begin{subfigure}[b]{0.4\textwidth} 
    \includegraphics[width=\textwidth]{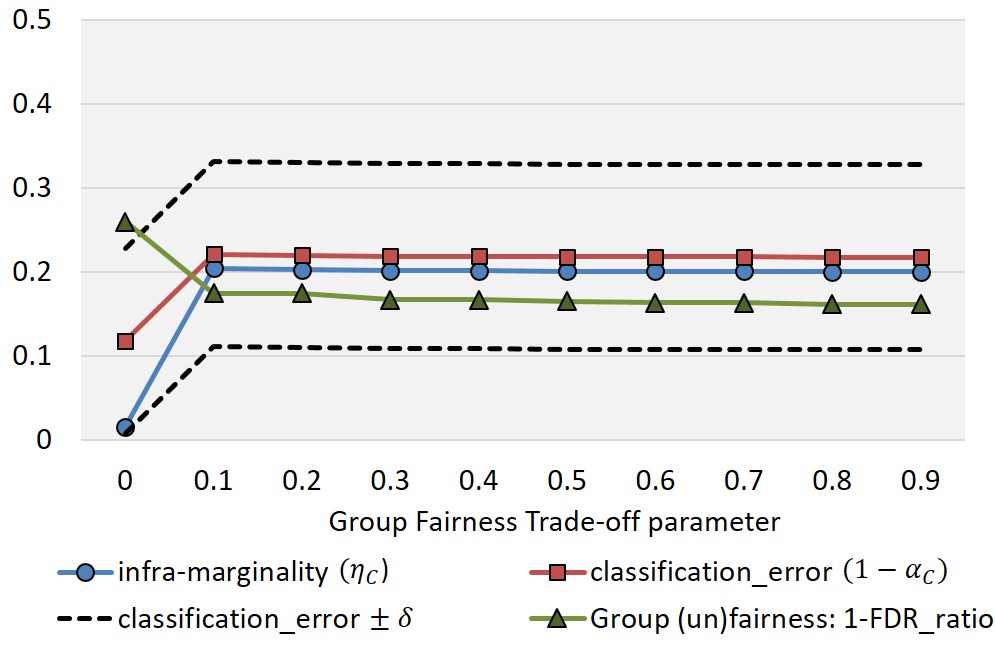}  
    \caption{S3}    
    \label{fig:risk_twoPeaks_steep_delta} 
\end{subfigure}  ~
\begin{subfigure}[b]{0.4\textwidth} 
        \includegraphics[width=\textwidth]{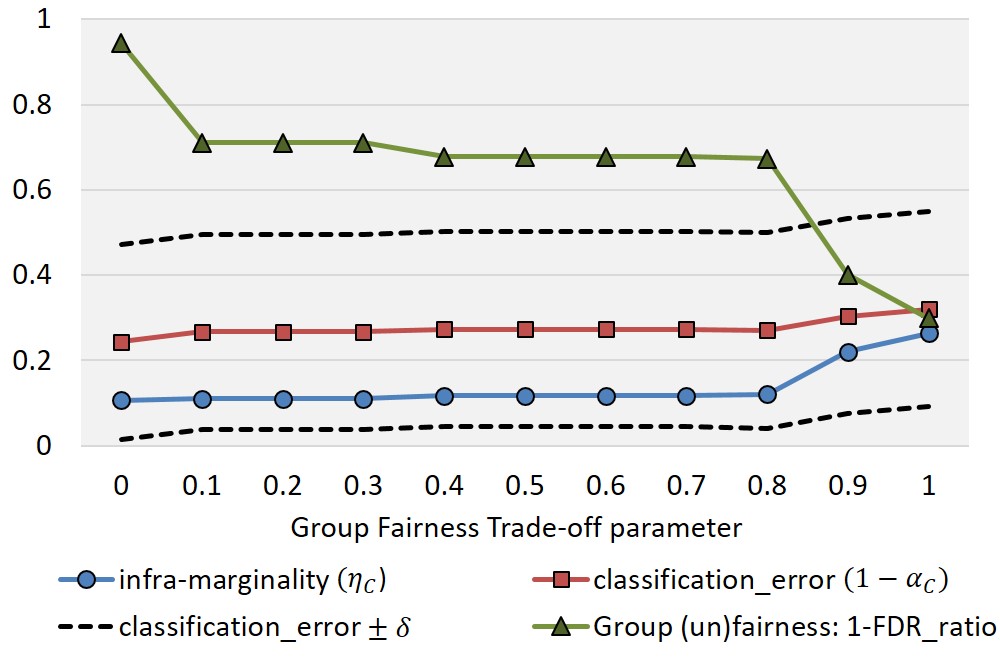} 
    \caption{S4}    
    \label{fig:diff_delta} 
\end{subfigure}   
\begin{subfigure}[b]{0.4\textwidth} 
        \includegraphics[width=\textwidth]{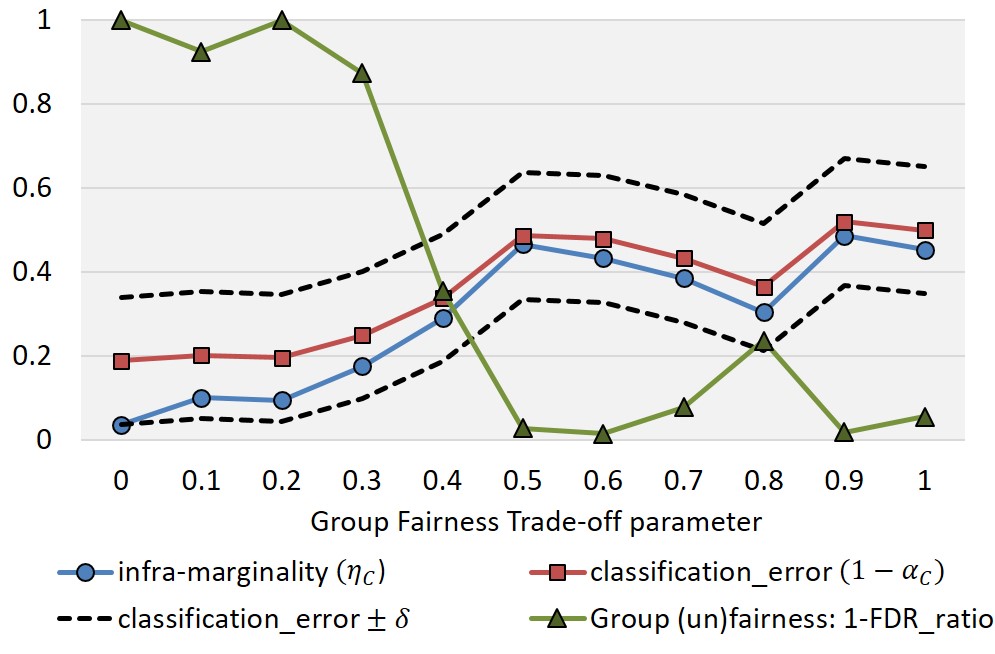} 
    \caption{S5}    
    \label{fig:apart_delta} 
\end{subfigure} 
\caption{Comparing the classification error, infra-marginality and group (un)fairness with increasing values of the group-fairness trade-off parameter in the meta-fair algorithm.}
\label{fig:delta} 
\end{figure}

\noindent \textbf{Tradeoff between infra-marginality and group fairness. }
Figures~\ref{fig:risk_close_delta} and \ref{fig:risk_twoPeaks_smooth_delta} show that, by invoking DI-fair constraint, the classification error and degree infra-marginality increases by $8-10\%$ more. Even for dataset S3, DI-fairness increases the classification error, and the corresponding graph obtained looks similar to that of the previous one (Figure~\ref{fig:risk_twoPeaks_smooth_delta}). 

For S3, we additionally evaluate the effect of using another fairness notion called False Discovery rate (FDR). Figure~\ref{fig:risk_twoPeaks_steep_delta}  shows that ensuring FDR-fairness hurts the accuracy by about $11\%$ and infra-marginality increases by about $19\%$.  
For S4, the FDR-fairness constraint not only causes increase classification error but also increases the false discovery rates for both subgroups (as shown in Figure~\ref{fig:twoPeaks_steep_groupwise}), which is an extremely undesirable consequence of ensuring group-fairness. 
We observe a similar trend for dataset S5 in Figure~\ref{fig:apart_delta}. Also, we observe that the ratio of false discovery rates do not uniformly improve on increasing the group-fairness parameter.\\

\begin{figure}[ht]
\centering
\includegraphics[width=0.7\textwidth]{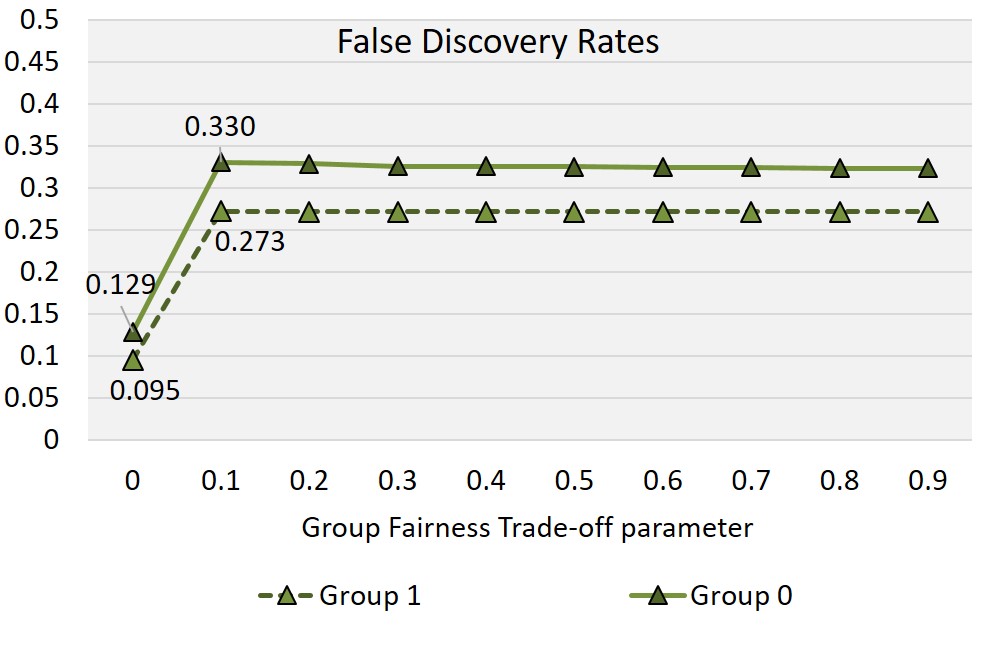}
\caption{FDRs for two groups increase while ensuring FDR-fairness, for S4.}
\label{fig:twoPeaks_steep_groupwise}
\end{figure}

\noindent \textbf{Group-wise accuracy and infra-marginality. } We show results for one of the datasets S4 by computing the metrics separately for each sensitive group; results on others are similar. 
In dataset S4, the $\delta$ values for group $0$ and $1$ are $0.08$ and $0.41$, respectively. Thus, the infra-marginality and error lines in Figure~\ref{fig:diff_delta_groupwise} (obtained using FDR-fairness constraint), almost coincides for group $0$, while these lines are quite apart for group $1$. Moreover, Figure~\ref{fig:diff_delta} shows that, when the trade-off parameter is $0.8$ or higher, the group-fairness improves, at a significant cost of infra-marginality and accuracy.\\

\begin{figure}[!h]
\centering
\begin{minipage}{\textwidth}
\centering
\includegraphics[width=0.48\textwidth]{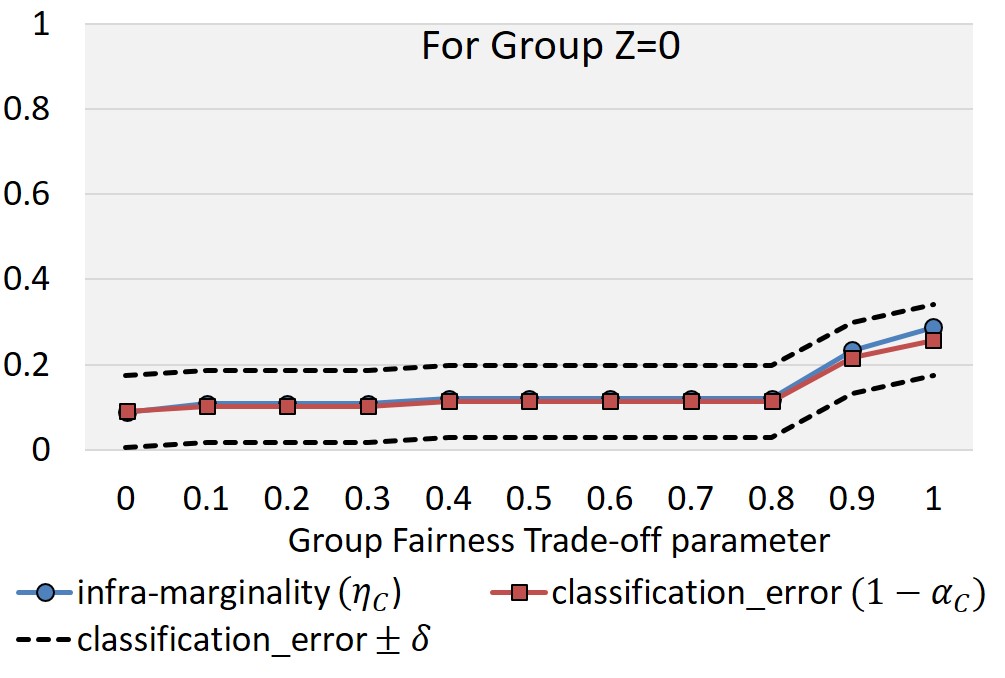}
\hfill
\includegraphics[width=0.48\textwidth]{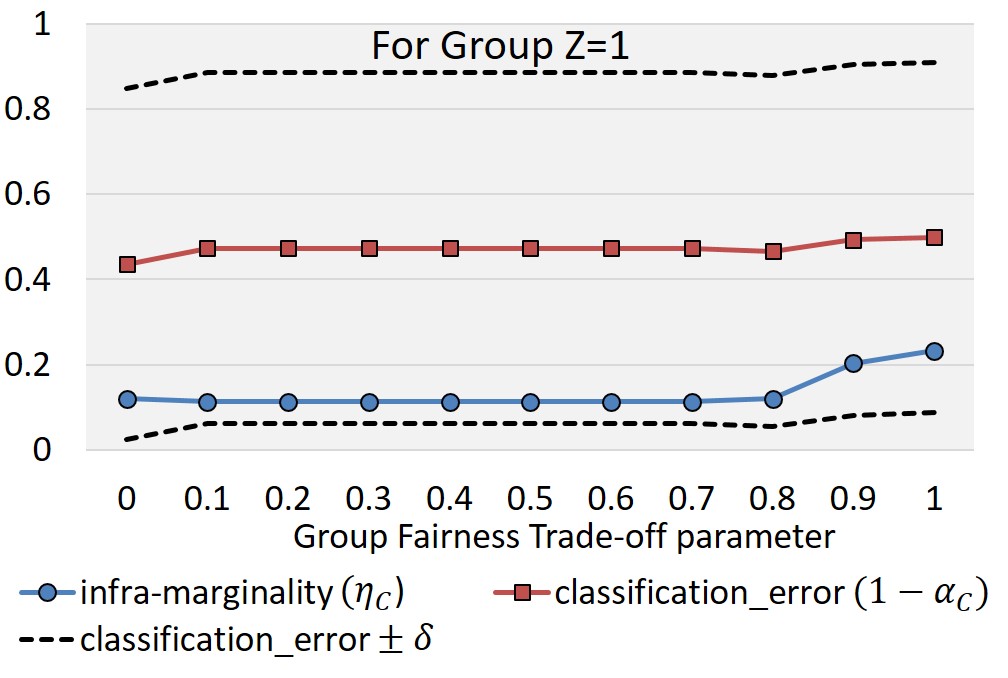}
\caption{Comparison of classification error and infra-marginality for dataset S$4$ with respect to two groups $Z=0$ and $Z=1$.\label{fig:diff_delta_groupwise}}
\end{minipage}

\end{figure}

\noindent \textbf{Summarizing the results on synthetically generated datasets}:
The observations over all the simulations summarize that the infra-marginality biases are within $\delta$ of the classifier's error rates. In fact, in almost all executions, when accuracy is the highest, the infra-marginality values are lower than $0.1$ ($10\%$). So, addressing the first question, we find that low classification error leads to low infra-marginality, even when the $\delta$ values are large. From our group-wise results on S4, we find a corresponding result:  classifiers are highly accurate for $Z=0$ and thus exhibit very low degree of infra-marginality, while accuracy for $Z=1$ group is around $60\%$ and thus this group suffers higher infra-marginality. Finally, we see that increasing group-fairness leads to worse accuracy, and in turn, higher degree of infra-marginality in all the datasets. 

\subsection{Case-Study}
We now describe our  results on the
\texttt{Adult} \texttt{Income} and \texttt{Medical} \texttt{Expense} datasets. 
\begin{itemize}[itemsep=0pt,leftmargin=*,topsep=2pt]
    \item \underline{\textit{Medical Expenditure Panel Survey dataset} (MEPS)}~\cite{url:meps}: This data consists of surveys of families and individuals collecting data on health services used, costs and frequency of services, demographics, etc., of the respondents. The classification task is to predict whether a person would have `high' utilization (defined as UTILIZATION $>= 10$, which is roughly the average utilization for the considered population). The feature `UTILIZATION', was created to measure the total number of trips requiring some sort of medical care, by summing up the following features: OBTOTV15 (the number of office based visits), OPTOTV15 (the number of outpatient visits), ERTOT15 (the number of ER visits), IPNGTD15 (the number of inpatient nights), and HHTOTD16, the number of home health visits. High utilization ($Y=1$) respondents constituted around $17\%$ of the dataset. The sensitive attribute, `RACE' is constructed as follows: `Whites` (Z=0) is defined by the features RACEV2X = 1 (White) and HISPANX = 2 (non Hispanic); everyone else are tagged `Non-Whites' (Z=1).
    \item \underline{\textit{Adult Income dataset}}~\cite{url:adultIncome}: This dataset contains complete information of about $30162$ individuals. It is obtained from the UCI Machine Learning Repository \cite{Dua:2017}. This dataset has been extensively used in supervised prediction of annual salary of individuals (whether or not an individual earns more than $\$ 50$K per year). High salary is used the outcome label $Y=1$. For our experiment, we consider features such as \texttt{age-groups}, \texttt{education-levels}, \texttt{race}, and \texttt{sex}. The column \texttt{income} represents $Y$ labels, which contains $1$ for individuals with $>\$50$K annual salary and $0$ otherwise. The \texttt{sex} attribute is considered to be a sensitive with $Z=1$ denoting ``Female'' individuals and $Z=0$ denoting ``Male''. \\
\end{itemize}

\noindent\textbf{Observations. } We follow a similar analysis to the simulated datasets by constructing classifiers with different group-fairness parameter and measuring infra-marginality. As described in Section~\ref{sec:subexp}, we first assume that $p^*$ can be approximated by the learnt outcome probabilities from the classifier. 
For MEPS, we observe a steep increase in infra-marginality and error rate when DI-fairness constraint is invoked (as shown in Figure~\ref{fig:meps_delta}). This result holds group-wise: when dividing the data by the sensitive group and considering them separately,  Figure~\ref{fig:meps_delta_groupwise} shows that adding the fairness constraint causes nearly $\mathbf{70\%}$ of the decisions to change compared to the classifier that maximizes accuracy within both groups, thus leading to high infra-marginality within both groups. \\

\begin{figure}[ht]
\centering
\includegraphics[width=0.6\columnwidth]{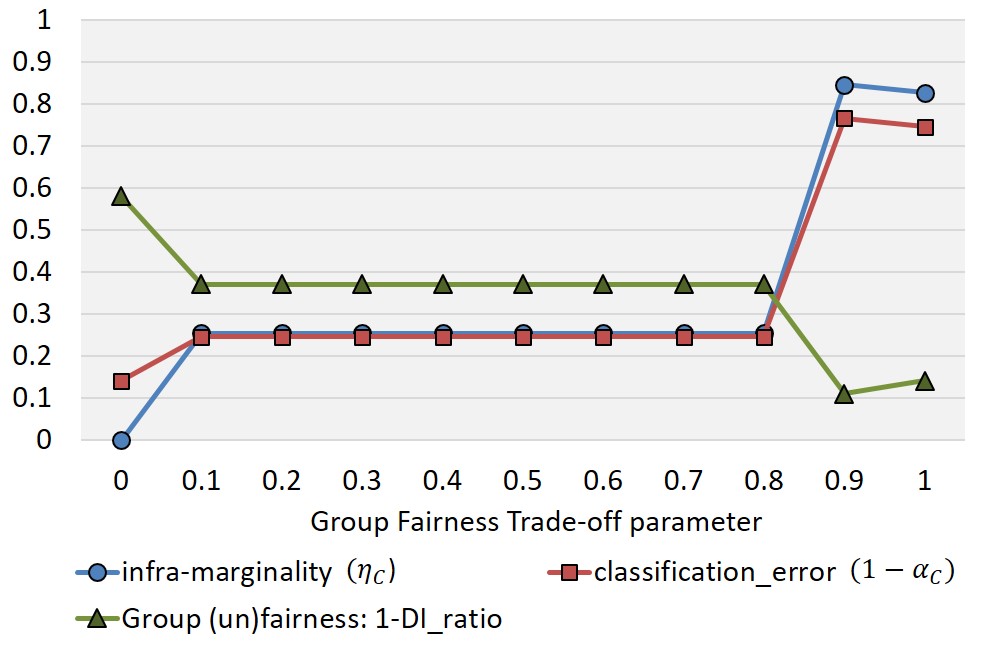}
\caption{Comparing classification error, infra-marginality and group (un)fairness with increasing values for the group-fairness trade-off parameter in the meta-fair algorithm. These classifiers are trained using the \texttt{MEPS} dataset.}
\label{fig:meps_delta}
\end{figure}

\begin{figure}[!ht]
\centering
\begin{subfigure}[t]{0.48\columnwidth}
\centering
\includegraphics[width=\textwidth]{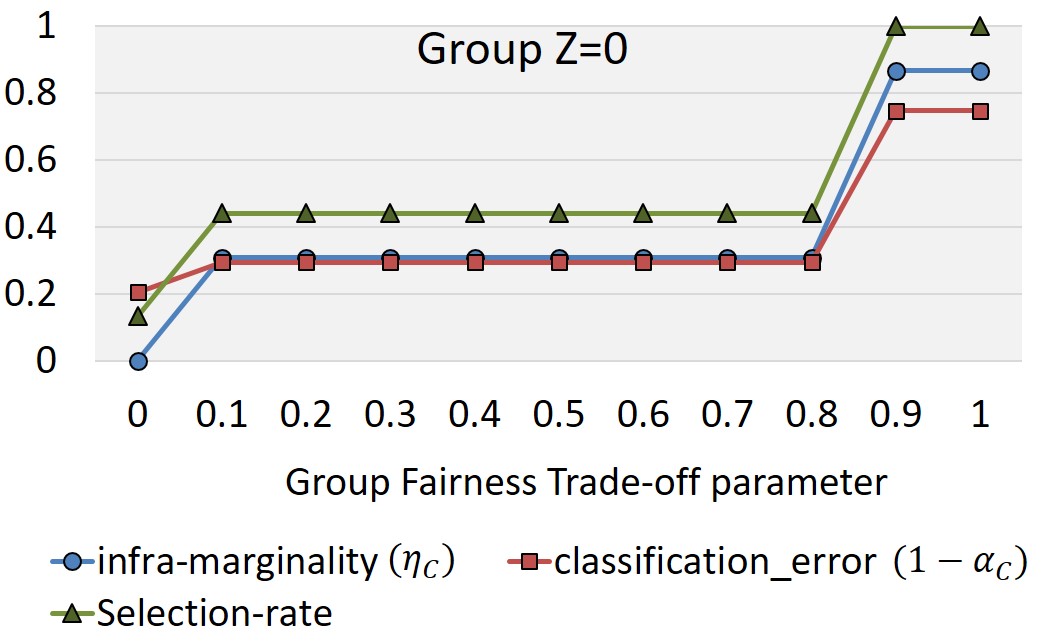}
\end{subfigure}
\hfill
\begin{subfigure}[t]{0.48\columnwidth}
\centering
\includegraphics[width=\textwidth]{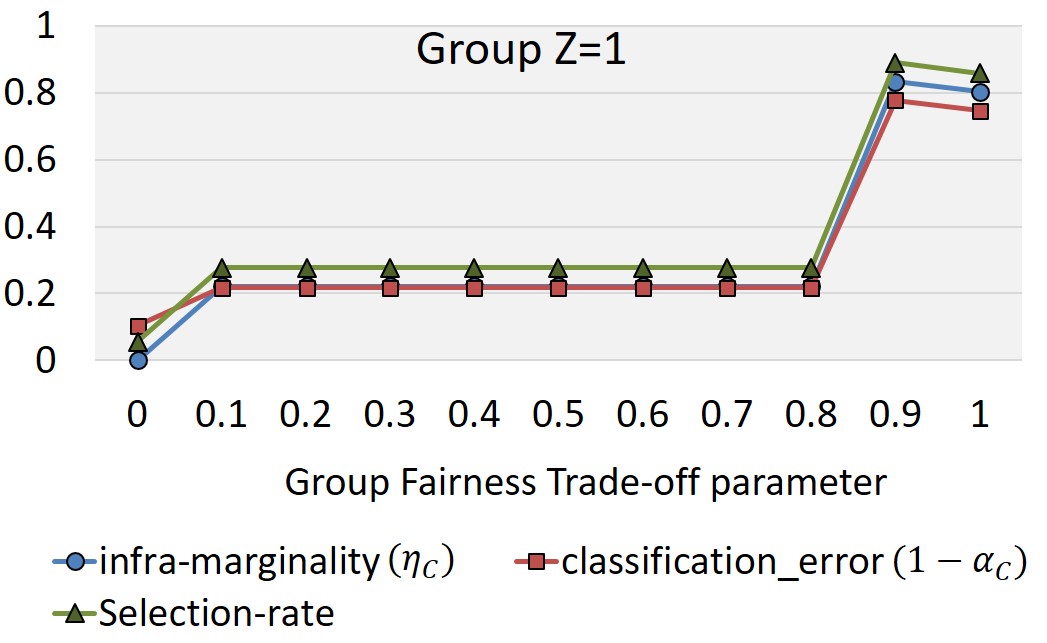}
\end{subfigure}
\caption{Groupwise comparison of classification error, infra-marginality and selection rate for \texttt{MEPS}  dataset.  \label{fig:meps_delta_groupwise}}
\end{figure}

Similarly for Adult Income dataset, we observe an increase in the degree of infra-marginality and error rate when FDR-fairness constraint is invoked (as shown in Figure~\ref{fig:adult_delta}). When looking at these metrics group-wise (Figure~\ref{fig:adult_delta_groupwise}), we see that adding the fairness constraint causes nearly $\mathbf{50\%}$ of the decisions to change (infra-marginality), among both the groups. Moreover, it also causes increase in the false discovery rates for both the groups; in particular, FDR is  more than $0.8$ for group $Z=0$.

\begin{figure}[!h]
\centering
\includegraphics[width=0.6\columnwidth]{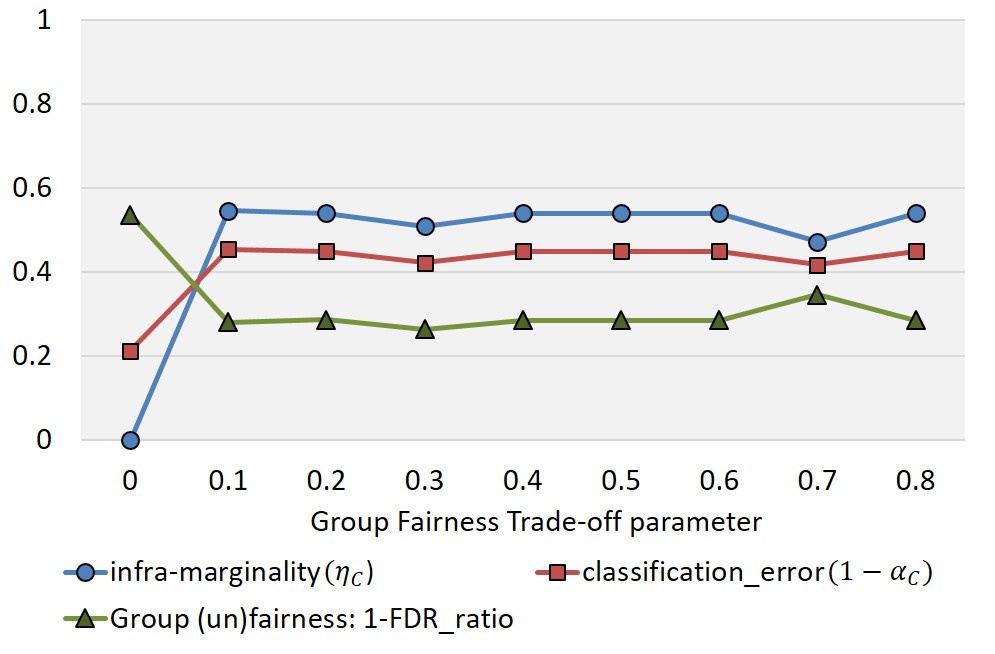}
\caption{Comparing classification error, infra-marginality and group (un)fairness with increasing values for the group-fairness trade-off parameter in the meta-fair algorithm. These classifiers are trained using the \texttt{Adult} \texttt{Income} dataset.}
\label{fig:adult_delta}
\end{figure}

\begin{figure}[!h]
\centering
\begin{subfigure}[t]{0.48\columnwidth}
\centering
\includegraphics[width=\textwidth]{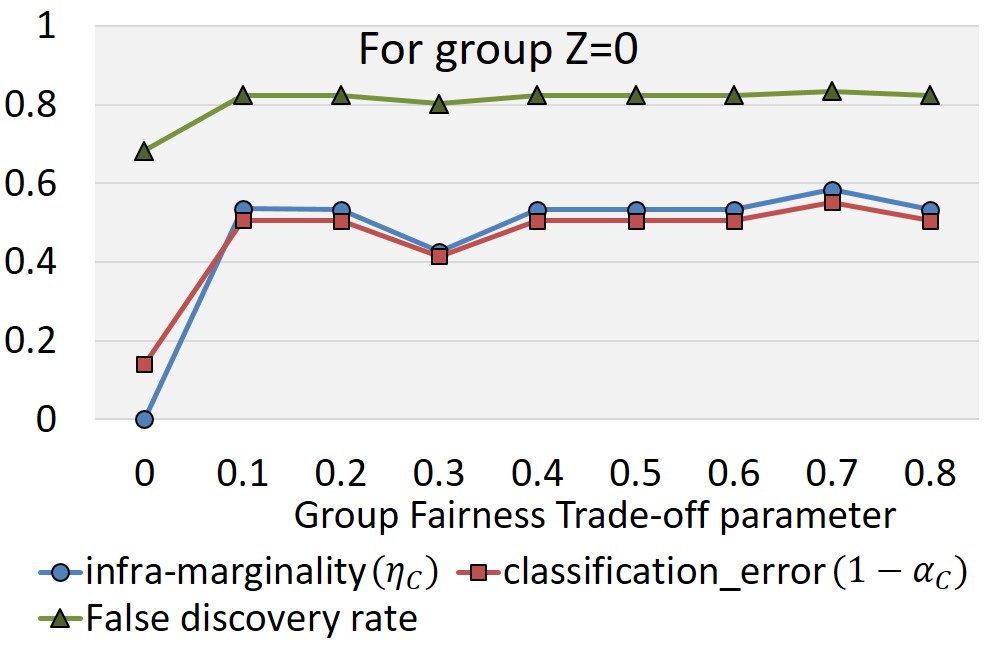}
\end{subfigure}
\hfill
\begin{subfigure}[t]{0.48\columnwidth}
\centering
\includegraphics[width=\textwidth]{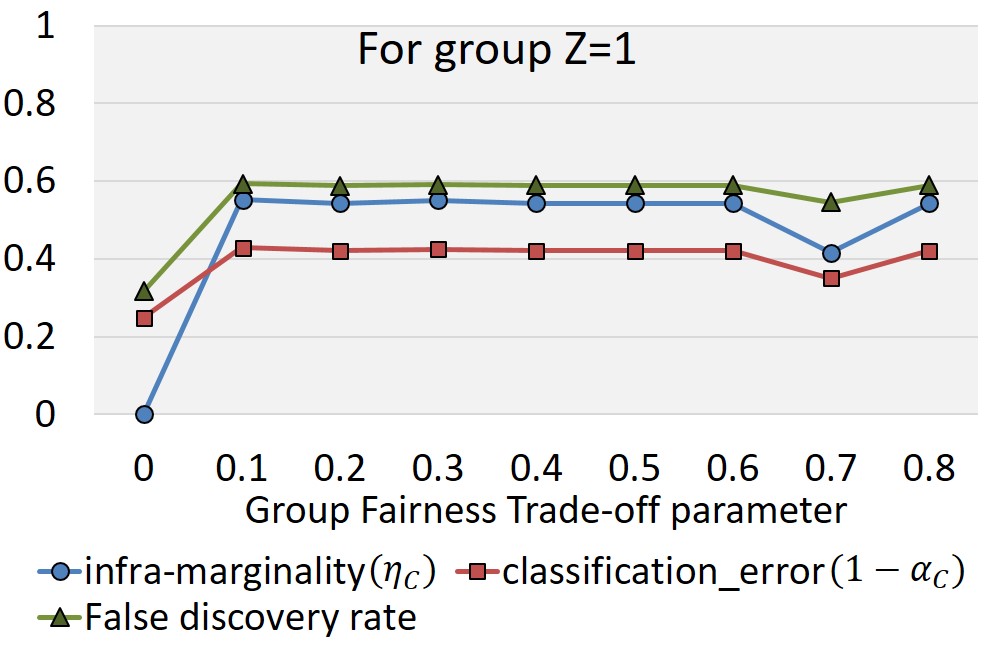}
\end{subfigure}
\caption{Groupwise comparison of classification error, infra-marginality and false discovery rate (FDR) for \texttt{Adult} \texttt{Income} dataset.  \label{fig:adult_delta_groupwise}}
\end{figure}

\begin{figure}[!h]
\centering
\begin{subfigure}[t]{0.48\columnwidth}
\centering
\includegraphics[width=\textwidth]{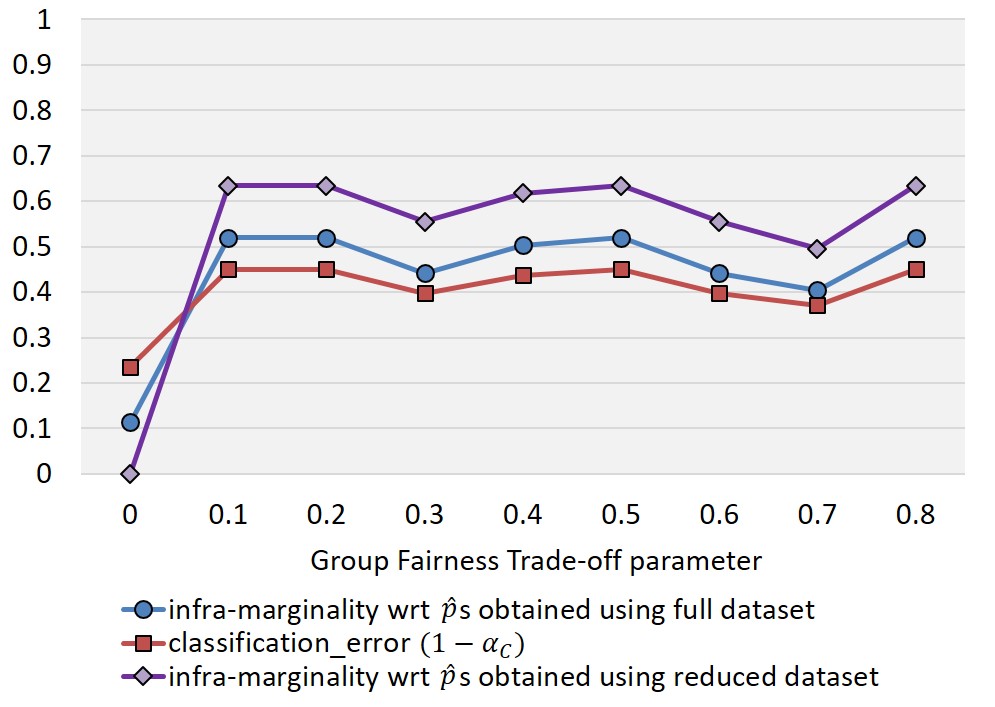}
\caption{Reduced dataset after removing \texttt{education-levels}}
\end{subfigure}
\hfill
\begin{subfigure}[t]{0.48\columnwidth}
\centering
\includegraphics[width=\textwidth]{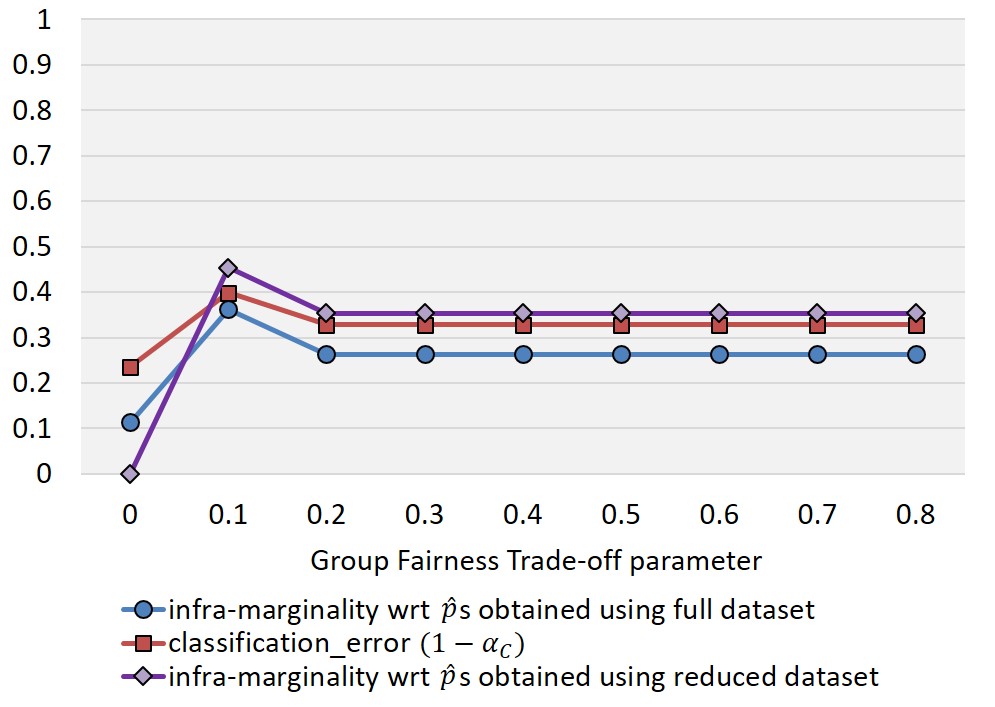}
\caption{Reduced dataset after removing \texttt{education-levels} and \texttt{race}}
\end{subfigure}
\caption{Comparison on \texttt{Adult Income} dataset, after removing (one or more) features.  \label{fig:adult_subset_delta}}
\end{figure}

So far, however, we assumed that $\hat{p}_x$ is a proxy for the true outcome probability. We now repeat the experiments on Adult Income data after leaving out an important feature \texttt{education-levels} and then leaving out \texttt{race} also. For these reduced datasets, we assume that the true outcome probability can be derived from the full dataset with all features ($p^*_x \approx \hat{p}_{allfeatures}$ ), but the classifier only has access to the reduced features. We observe that, even when all the features are not available, low  infra-marginality with respect to full dataset is associated with low classification error (Figure~\ref{fig:adult_subset_delta}). We also observe that infra-marginality with respect to the $\hat{p}$s obtained using reduced dataset is correlated to the infra-marginality with respect to the $\hat{p}$s obtained using the full dataset. This observation help us claim that, in real-world datasets (where we may not have full information of all the features), we can quantify infra-marginality using $\hat{p}s$ obtained from accuracy-maximizing classifier.\\

\noindent \textbf{Summarizing the results on real-world datasets}:
We observe that, in both \texttt{Adult Income} and \texttt{MEPS} datasets, low classification error led to low infra-marginality. Further, we see that increasing the extent of group-fairness may lead to worse accuracy and, in turn, higher degree of infra-marginality. We observe similar trend in the group-wise results for \texttt{Adult Income} dataset. Finally, we investigate whether unavailability of one or more features would have an impact in the above observations. We find that, even with less features, the infra-marginality remains lowest when the classification error is the lowest.   

\section{Concluding Discussion}\label{sec:conclude}
We provided a metric for infra-marginality due to a machine learning classifier and characterized its relationship with accuracy and group-based fairness metrics. 
In cases where unbiased estimation of the true probability of outcome, $p^*$, is possible, our theoretical results indicate the value of considering infra-marginality in addition to prominent group fairness metrics. Moreover, we showed that optimizing for infra-marginality results in a linear constraint that can be efficiently solved, in contrast to the non-convex constraints that typical group fairness metrics impose.

However, measuring infra-marginality requires estimating $p^*$ for any decision-making scenario, which is not usually easy to obtain. In some settings, such as the search for contraband by stopping cars on the highway~\cite{simoiu2017problem}, we do obtain a proxy for $p^*$, by making assumptions on the police officers' decision-making. This is also possible in other law enforcement contexts, such as searching for prohibited items at airport security, wherein one can argue that security officials stop a person only if they detect a suspect object through the X-ray and, thus, the logged data of baggage searches can be assumed to be an unbiased estimate of $p^*$. Further, airport security might decide to perform random searches, which can serve as ``gold-standard'' unbiased estimators of $p^*$.

When such data is not logged, we suggest actively changing the current decision-making process to introduce some unbiased data that can be used for estimating $p^*$. This can be done by adding probabilistic decisions, such as randomly deciding to search a person, awarding a loan to a small fraction of applicants, and so on. Decisions need not be fully random: recent work on multi-armed and contextual bandits~\cite{agarwal2017bandits} makes it possible to make optimized decisions, while still collecting data for unbiased estimation of people's probability of outcome, i.e., $p^*$, irrespective of the decision they received. While this involves considerable effort and collaboration with decision-makers, we believe that the twin benefits of an  interpretable single threshold and a straightforward learning algorithm that does not constrain use of the full data for modeling, outweigh the implementation costs.

Finally, we acknowledge that there will always be cases when unbiased data collection or active intervention is not possible. This is the case especially in scenarios where outcome data is available only for people who received one of the decisions, but never both. For example, we may observe outcomes for only the people who received a loan or who were let out on bail. As a result, it is hard to know the underlying selection biases that might have impacted the inclusion of people in a particular dataset, and we leave deriving unbiased estimators for these problems as an interesting direction for future work.

\subsubsection*{Acknowledgments.}  Arpita Biswas sincerely acknowledges the support of a Google PhD Fellowship Award. 
Siddharth Barman gratefully acknowledges the support of a Ramanujan Fellowship (SERB - {SB/S2/RJN-128/2015}) and a Pratiksha Trust Young Investigator Award. 

\bibliographystyle{alpha}
\bibliography{sample}

\newcommand{\etalchar}[1]{$^{#1}$}
\begin{thebibliography}{ZVGRG17}

\bibitem[ABC{\etalchar{+}}16]{agarwal2017bandits}
Alekh Agarwal, Sarah Bird, Markus Cozowicz, Luong Hoang, John Langford, Stephen
  Lee, Jiaji Li, Dan Melamed, Gal Oshri, Oswaldo Ribas, et~al.
\newblock Making contextual decisions with low technical debt.
\newblock {\em arXiv preprint arXiv:1606.03966}, 2016.

\bibitem[ALMK16]{angwin2016machine}
Julia Angwin, Jeff Larson, Surya Mattu, and Lauren Kirchner.
\newblock Machine bias: There’s software used across the country to predict
  future criminals.
\newblock {\em And it's biased against blacks. ProPublica}, 2016.
\newblock https://www.propublica.org/article/machine-bias-riskassessments-
  in-criminal-sentencing.

\bibitem[BDH{\etalchar{+}}18]{aif360-oct-2018}
Rachel K.~E. Bellamy, Kuntal Dey, Michael Hind, Samuel~C. Hoffman, Stephanie
  Houde, Kalapriya Kannan, Pranay Lohia, Jacquelyn Martino, Sameep Mehta,
  Aleksandra Mojsilovic, Seema Nagar, Karthikeyan~Natesan Ramamurthy, John
  Richards, Diptikalyan Saha, Prasanna Sattigeri, Moninder Singh, Kush~R.
  Varshney, and Yunfeng Zhang.
\newblock {AI Fairness} 360: An extensible toolkit for detecting,
  understanding, and mitigating unwanted algorithmic bias, October 2018.

\bibitem[BHJ{\etalchar{+}}17]{berk2017fairness}
Richard Berk, Hoda Heidari, Shahin Jabbari, Michael Kearns, and Aaron Roth.
\newblock Fairness in criminal justice risk assessments: the state of the art.
\newblock {\em arXiv preprint arXiv:1703.09207}, 2017.

\bibitem[BS16]{barocas2016big}
Solon Barocas and Andrew~D Selbst.
\newblock Big data's disparate impact.
\newblock {\em Cal. L. Rev.}, 104:671, 2016.

\bibitem[CDG18]{corbett2018measure}
Sam Corbett-Davies and Sharad Goel.
\newblock The measure and mismeasure of fairness: A critical review of fair
  machine learning.
\newblock {\em arXiv preprint arXiv:1808.00023}, 2018.

\bibitem[CDPF{\etalchar{+}}17]{corbett2017algorithmic}
Sam Corbett-Davies, Emma Pierson, Avi Feller, Sharad Goel, and Aziz Huq.
\newblock Algorithmic decision making and the cost of fairness.
\newblock In {\em Proceedings of the 23rd ACM SIGKDD International Conference
  on Knowledge Discovery and Data Mining}, pages 797--806. ACM, 2017.

\bibitem[CHKV19]{celis2018classification}
L~Elisa Celis, Lingxiao Huang, Vijay Keswani, and Nisheeth~K Vishnoi.
\newblock Classification with fairness constraints: A meta-algorithm with
  provable guarantees.
\newblock {\em Fairness, Transperancy and Accountability of Algorithmic Systems
  {ACM FAT}}, 2019.

\bibitem[Cho17]{chouldechova2017fair}
Alexandra Chouldechova.
\newblock Fair prediction with disparate impact: A study of bias in recidivism
  prediction instruments.
\newblock {\em Big data}, 5(2):153--163, 2017.

\bibitem[CJS18]{irene2018why}
Irene Chen, Fredrik~D. Johansson, and David Sontag.
\newblock Why is my classifier discriminatory?
\newblock In {\em Advances in Neural Information Processing Systems 31: Annual
  Conference on Neural Information Processing Systems {NIPS}}, pages
  3543--3554, 2018.

\bibitem[CR18]{chouldechova2018frontiers}
Alexandra Chouldechova and Aaron Roth.
\newblock The frontiers of fairness in machine learning.
\newblock {\em CoRR}, abs/1810.08810, 2018.

\bibitem[DHP{\etalchar{+}}12]{dwork2012fairness}
Cynthia Dwork, Moritz Hardt, Toniann Pitassi, Omer Reingold, and Richard Zemel.
\newblock Fairness through awareness.
\newblock In {\em Proceedings of the 3rd Innovations in Theoretical Computer
  Science Conference}, pages 214--226. ACM, 2012.

\bibitem[DKT17]{Dua:2017}
Dua Dheeru and Efi Karra~Taniskidou.
\newblock {UCI} machine learning repository, 2017.

\bibitem[FCS{\etalchar{+}}19]{friedler2018comparative}
Sorelle~A Friedler, Sonam Choudhary, Carlos~Eduardo Scheidegger, Evan~P
  Hamilton, Suresh Venkatasubramanian, and Derek Roth.
\newblock A comparative study of fairness-enhancing interventions in machine
  learning.
\newblock In {\em 2019 ACM Conference on Fairness, Accountability, and
  Transparency, FAT* 2019}, pages 329--338. Association for Computing
  Machinery, Inc, 2019.

\bibitem[FFM{\etalchar{+}}15]{feldman2015certifying}
Michael Feldman, Sorelle~A Friedler, John Moeller, Carlos Scheidegger, and
  Suresh Venkatasubramanian.
\newblock Certifying and removing disparate impact.
\newblock In {\em Proceedings of the 21th ACM SIGKDD International Conference
  on Knowledge Discovery and Data Mining}, pages 259--268. ACM, 2015.

\bibitem[fHRQ16]{url:meps}
Agency for Healthcare~Research and Quality.
\newblock {Medical Expenditure Panel Survey}.
\newblock \url{https://meps.ahrq.gov/mepsweb/}, 2016.

\bibitem[Fur02]{furletti2002overview}
Mark~J Furletti.
\newblock An overview and history of credit reporting.
\newblock 2002.
\newblock http://dx.doi.org/10.2139/ssrn.927487.

\bibitem[GGC18]{goodman2018machine}
Steven~N Goodman, Sharad Goel, and Mark~R Cullen.
\newblock Machine learning, health disparities, and causal reasoning.
\newblock {\em Annals of internal medicine}, 2018.

\bibitem[GPSS17]{goel2017combatting}
Sharad Goel, Maya Perelman, Ravi Shroff, and David~Alan Sklansky.
\newblock Combatting police discrimination in the age of big data.
\newblock {\em New Criminal Law Review: In International and Interdisciplinary
  Journal}, 20(2):181--232, 2017.

\bibitem[HPS{\etalchar{+}}16]{hardt2016equality}
Moritz Hardt, Eric Price, Nati Srebro, et~al.
\newblock Equality of opportunity in supervised learning.
\newblock In {\em Advances in neural information processing systems}, pages
  3315--3323, 2016.

\bibitem[KAAS12]{kamishima2012fairness}
Toshihiro Kamishima, Shotaro Akaho, Hideki Asoh, and Jun Sakuma.
\newblock Fairness-aware classifier with prejudice remover regularizer.
\newblock In {\em Joint European Conference on Machine Learning and Knowledge
  Discovery in Databases}, pages 35--50. Springer, 2012.

\bibitem[KB96]{url:adultIncome}
Ronny Kohavi and Barry Becker.
\newblock {Adult Data Set }.
\newblock \url{https://archive.ics.uci.edu/ml/datasets/adult}, 1996.
\newblock [accessed 20-February-2019].

\bibitem[KC12]{kamiran2012data}
Faisal Kamiran and Toon Calders.
\newblock Data preprocessing techniques for classification without
  discrimination.
\newblock {\em Knowledge and Information Systems}, 33(1):1--33, 2012.

\bibitem[KMR17]{kleinberg2017inherent}
Jon Kleinberg, Sendhil Mullainathan, and Manish Raghavan.
\newblock Inherent trade-offs in the fair determination of risk scores.
\newblock {\em Innovations in Theoretical Computer Science}, 2017.

\bibitem[PCG18]{pierson2018fast}
Emma Pierson, Sam Corbett{-}Davies, and Sharad Goel.
\newblock Fast threshold tests for detecting discrimination.
\newblock In {\em International Conference on Artificial Intelligence and
  Statistics {AISTATS}}, pages 96--105, 2018.

\bibitem[RR14]{romei2014multidisciplinary}
Andrea Romei and Salvatore Ruggieri.
\newblock A multidisciplinary survey on discrimination analysis.
\newblock {\em The Knowledge Engineering Review}, 29(5):582--638, 2014.

\bibitem[SCDG17]{simoiu2017problem}
Camelia Simoiu, Sam Corbett-Davies, and Sharad Goel.
\newblock The problem of infra-marginality in outcome tests for discrimination.
\newblock {\em The Annals of Applied Statistics}, 11(3):1193--1216, 2017.

\bibitem[Tie13]{tierney2013fairness}
Robin~D Tierney.
\newblock Fairness in classroom assessment.
\newblock {\em SAGE handbook of research on classroom assessment}, pages
  125--144, 2013.

\bibitem[WGOS17]{woodworth2017learning}
Blake Woodworth, Suriya Gunasekar, Mesrob~I Ohannessian, and Nathan Srebro.
\newblock Learning non-discriminatory predictors.
\newblock In {\em Conference on Learning Theory}, pages 1920--1953, 2017.

\bibitem[ZVGRG17]{zafar2017afairness}
Muhammad~Bilal Zafar, Isabel Valera, Manuel Gomez~Rodriguez, and Krishna~P
  Gummadi.
\newblock Fairness constraints: Mechanisms for fair classification.
\newblock In {\em Artificial Intelligence and Statistics}, pages 962--970,
  2017.

\bibitem[ZWS{\etalchar{+}}13]{zemel2013learning}
Rich Zemel, Yu~Wu, Kevin Swersky, Toni Pitassi, and Cynthia Dwork.
\newblock Learning fair representations.
\newblock In {\em Proceedings of the 30th International Conference on Machine
  Learning (ICML-13)}, pages 325--333, 2013.

\end{thebibliography}

\end{document}